\newcommand{\Reals}{\mathbb{R}}
\definecolor{CBred}{RGB}{203,24,29}
\definecolor{CBgreen}{RGB}{35,139,69}
\definecolor{CBblue}{RGB}{39,130,189}
\newtheorem{theorem}{Theorem}
\newtheorem{lemma}{Lemma}
\newtheorem{cor}{Corollary}
\title{Better Hit the Nail on the Head than Beat around the Bush:\\ Removing Protected Attributes with a Single Projection}
    \author{Pantea Haghighatkhah$^\diamondsuit$ \ \ Antske Fokkens$^{\clubsuit\diamondsuit}$\ \  Pia Sommerauer$^\clubsuit$ 
    \\ \textbf{Bettina Speckmann}$^\diamondsuit$ \ \ \textbf{Kevin Verbeek}$^\diamondsuit$\\
    \\
    $\diamondsuit$TU Eindhoven, Department of Mathematics and Computer Science\\
   $\clubsuit$ Vrije Universiteit Amsterdam, Computational Linguistics Text Mining Lab\\
  }
\begin{document}
\maketitle
\begin{abstract}


Bias elimination and recent probing studies attempt to remove specific information from embedding spaces. 
Here it is important to remove as much of the target information as possible, while preserving any other information present. 
INLP is a popular recent method which removes specific information through iterative nullspace projections.
Multiple iterations, however, increase the risk that information other than the target is negatively affected.
We introduce two methods that find a single targeted projection: Mean Projection (MP, more efficient) and Tukey Median Projection (TMP, with theoretical guarantees). 
Our comparison between MP and INLP shows that (1) one MP projection removes linear separability based on the target and (2) MP has less impact on the overall space.
Further analysis shows that applying random projections after MP leads to the same overall effects on the embedding space as the multiple projections of INLP. 
Applying one targeted (MP) projection hence is methodologically cleaner than applying multiple (INLP) projections that introduce random effects. 





\end{abstract}

\section{Introduction}\label{sec:intro}

Word embedding spaces can contain rich information that is valuable for a wide range of NLP tasks.
High quality word embeddings should capture the semantic attributes associated with a word's meaning.
Though we can establish that points representing words with similar semantic attributes tend to be close to each other, it is challenging to reason which attributes are captured to what extent and, in particular, which dimensions capture these attributes \cite{sommerauer2018firearms}. 
But even though we do not (yet) know exactly which attributes are represented in which way, we nevertheless may want to ensure that one particular attribute is no longer present in the embedding. For example, because this attribute is a protected attribute such as gender which can lead to harmful bias, or because we want to test how the attribute was affecting a system in a probing setup.

There are various approaches to remove a particular attribute from an embedding.
Clearly it is important to remove as much of the target attribute as possible while preserving other information present. Arguably linear projections are one of the least intrusive methods to transform an embedding space. Iterative nullspace projection \cite[INLP]{nullitout} is a recent popular method which follows this paradigm and uses multiple linear projections. INLP achieves \emph{linear guarding} of the protected attribute, that is, a linear classifier can no longer separate instances that have the attribute from instances that do not. 

However, it remains unclear to what extent INLP affects the remainder of the embedding space. In general it requires 10-15 iterations (projections) to achieve linear guarding; it is hence likely that other attributes are negatively affected as well. This is particularly problematic in settings where INLP is used to determine what impact a specific attribute has on overall systems.

In this paper, we introduce two methods that find a single targeted projection which achieves linear guarding at once. Specifically, our \textbf{Mean Projection [MP]} method uses the mean of the points in each class to determine the projection direction. The mean is more robust to unbalanced class distributions than INLP. The mean can, however, be overly sensitive to outliers. We prove that our second method, the \textbf{Tukey Median Projection [TMP]}, finds nearly worst-case optimal projections for any input. 
Unfortunately, computing the Tukey median \cite{AlgCombGeom:87} is computationally expensive in high dimensions. In our experiments we hence compared only MP to INLP. Specifically, we carried out the gender debiasing experiments of \citet{nullitout} with both methods. They show that 
\begin{enumerate}
    \item MP only needs one projection for linearly guarding gender where INLP needs 10-15.
    \item MP has less impact on other aspects of the embedding space.
\end{enumerate}
INLP projections improve simlex scores and reduce WEAT scores more than MP. 
A priori it is unclear why their many projections should have this effect. 
We investigated and show that the same improvements appear after applying random projections (either after MP or after the first INLP projections). 
Applying one MP projection to linearly guard an attribute is hence methodologically cleaner than applying multiple (INLP) projections.


\section{Related Work}\label{sec:relatedwork}

Multiple methods for removing bias from embeddings have been suggested. Bias can be addressed at the level of the training data \citep{debias_dataset}, the training process \citep{adversarial_debias, fair_adversarial}, and the resulting model itself \cite{nullitout, bolukbasi, kernel}. Debiasing an existing model has the clear advantage that it can be done with relatively little data and that the model does not need to be retrained. In this paper, we focus on removing bias through transforming the embedding space. 

Transformations of embedding spaces can be performed by means of linear projections \cite{bolukbasi}, multiple linear projections \cite{nullitout}, or via non-linear kernels \cite{kernel}. \citet{nullitout} introduce a method called Iterative Nullspace Projection (INLP) where they attempt to remove the bias by projecting points iteratively. Our methods achieve linear guarding after a single linear projection. As such we directly improve upon the INLP method, which we describe in detail in Section~\ref{sec:problem}. In the remainder of this section, we focus on more recent post-training approaches and applications of INLP.

Similar to \citet{nullitout}, \citet{rlace-ravfogel22a} aim to find a linear subspace of the data such that removing that subspace using projection removes bias in the data optimally. Doing so necessarily requires a metric to measure how well bias has been removed by a specific projection. \citet{rlace-ravfogel22a} use the classification loss of various classifiers as metrics. That is, a higher classification loss after projection corresponds to better bias removal. This directly translates into a minimax optimization problem, which the authors refer to as a minimax game. The authors present different strategies to solve this optimization problem for their chosen classifiers. A similar strategy was previously described by \citet{obstruct_mfcs, obstruct}. Here the corresponding minimax optimization problem is referred to as ``maximizing inseparability''. \citet{obstruct} show that this problem can be solved efficiently under certain convexity assumptions on the metric.

\citet{zhang2022probing} independently developed a projection method which is very similar to our MP. Their method also uses class means to find the projection direction, but proceeds via a projection to the origin, and hence needs one additional projection. The goal of the paper, however, is quite different, namely the study of human brain reactions to syntactic and semantic features of words. As such, it does not include a systematic comparison to INLP or a comprehensive analysis of the impact on the embedding space.

\citet{gold_glitter} in recent and unpublished work compute linear projection vectors by minimizing the covariance between biased word vectors and the protected attribute. They generally need two projections to achieve comparable or better performance to INLP. The paper does not report on exactly the same experiments as we do, but for all experiments reported in both papers, our method MP performs similarly or better. 
  
\citet{dev-etal-2021-oscar} use a different kind of linear transformation on embedding spaces, namely rotation. Their goal is to disentangle two particular attributes, such as gender and occupation. To do so, they rotate the embedding space in such as way that the subspaces corresponding to the two attributes become orthogonal.
On the positive side, their approach does not remove a dimension and hence arguably retains more information in the embedding. However, it will not remove all bias with respect to gender, but only gender bias with respect to a single specified attribute, e.g., occupation. More generally, it will remove the interaction between the two specified attributes and not actually completely remove an attribute from the embedding.



INLP has recently gained importance for probing studies. Probing has been criticized because (1) results are difficult to interpret, and (2) it is usually not possible to test whether the target attribute is also relevant for downstream tasks. \newcite{elazar2021amnesic} propose \emph{amnesic probing}, which employs INLP to \emph{remove} the target attribute and then tests change in performance of the adapted embeddings on downstream tasks. INLP has since been used for this purpose in several other studies \cite[e.g.]{celikkanat-etal-2020-controlling,nikoulina2021rediscovery,babazhanova2021geometric,dankers2022can,gonen2022analyzing,lovering2022unit}. When using a projection-based removal method such as INLP or our MP in a probing setup, it is particularly important to ensure that all other information is preserved. 
If this is not the case, the probing study may lead to misleading conclusions. 
We hence recommend future such studies to consider using MP instead of INLP.


\section{Projection Methods}\label{sec:problem}

In this section we introduce our two new projection methods MP and TMP. Furthermore, we theoretically analyze their relative performance when (linearly) debiasing word embeddings, also in comparison to the existing method INLP. Here we focus on binary classification for ease of explanation. Both methods can however also be used for multi-class classification (see Section~\ref{ssec:original_ex}) and then require $n-1$ projections for $n$ classes.
We start with a formal description of the problem we study.

Our input is a set of word embedding vectors in $\Reals^d$. We interpret these vectors as a set of points $P = \{p_1, \ldots, p_n\}$ in $\Reals^d$ and often use the term ``point'' when referring to an embedding vector.
Every point (word) has an associated set of discrete attributes ($A = \{a_1, \dots, a_n\}$). 
Let $a^*$ be the attribute we aim to remove.
For simplicity, we assume that $a^*$ is a binary attribute with values $\{-1, +1\}$. We denote the two resulting classes of points as $P^- = \{p_i \in P \mid a^*_i = -1\}$ and $P^+ = \{p_i \in P \mid a^*_i = +1\}$.
We want to find a transformation $P'$ of our point set $P$ such that any linear classifier $C$ trained on $P'$ to classify $a^*$ cannot significantly outperform a classifier that labels by the majority class.
That is, $P'$ is \textbf{linearly guarded} with respect to $a^*$. 

In the following we consider only transformations which consist of one or more projections.
Let $p$ be a point in $P$. 
We define its projection $p_w$ along a unit vector $w$ as $p_w = p - (p \cdot w) w$. 
The projection along the vector $w$ maps points to the hyperplane $H_w$ which is orthogonal to $w$ ($H_w \bot w$) and contains the origin. 

To evaluate how well INLP, MP, and TMP do in terms of linear guarding, we consider the number of misclassifications with respect to $a^*$ by the best possible linear classifier after a single projection; the higher the number of misclassifications, the better the method performs.
If a single projection is sufficient to achieve linear guarding, then arguably the semantic encoding of other attributes in our word embeddings are preserved as well as possible.

From a theoretical perspective, TMP is the most effective method; it increases the number of misclassifications the most. However, it is costly to compute exactly for data in 300+ dimensions. 
MP is not as effective in theory as TMP, but it is more effective than INLP. 
Furthermore, MP is very easy to compute and appears to be very effective in practice. 
Hence, we evaluate the efficacy of MP extensively in Section~\ref{sec:experiments} and recommend to use MP for linear guarding in practice.

\begin{figure*}
\centering
\begin{minipage}[b]{\columnwidth}
\centering
    \includegraphics{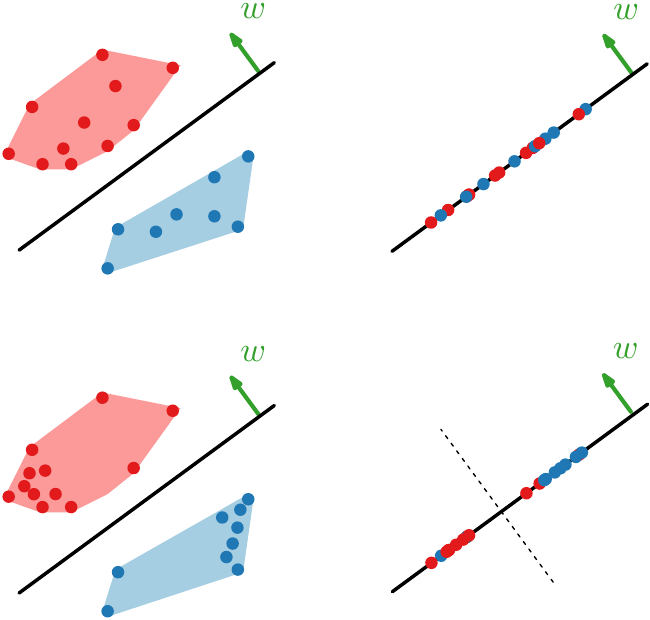}
    \caption{Attribute $a^*$ is represented by color; the resulting point sets after projection along $w$.}
    \label{fig:convexhull_problem}
\end{minipage}
\hfill
\begin{minipage}[b]{\columnwidth}
\centering
    \includegraphics{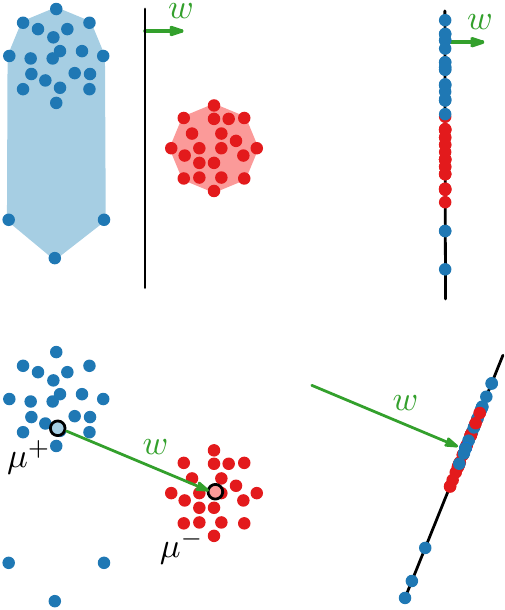}
    \caption{Attribute $a^*$ represented by color; results of INLP (top) and MP (bottom).}
    \label{fig:svm_vs_mean}
\end{minipage}
\end{figure*}

\subsection{INLP}
Iterative Nullspace Projection~\cite[INLP]{nullitout} debiases by iteratively projecting along a vector computed by a linear classifier. Here we assume that this linear classifier is a linear Support Vector Machine (SVM), as this is the main classifier used in their paper. In that setting, at each iteration, INLP trains an SVM to classify $a^*$ in the point set $P$.
Since $a^*$ is binary, the result of the SVM is a vector $w$ (along with a single real value, which we may ignore). 
The input points are then projected along $w$ and the resulting point set is used as the input for the next iteration.

If $P^-$ can be separated from $P^+$ with a hyperplane, then the vector $w$ is the normal of a separating hyperplane and indicates the direction along which there is the largest gap between $P^-$ and $P^+$: the SVM margin. If $P^-$ and $P^+$ cannot be separated by a hyperplane, then the classifier allows misclassifications for a certain penalty, but otherwise still maximizes the margin. In this setting, SVM uses a parameter that controls the trade-off between misclassifications and the margin.

\paragraph{Analysis.} The vector $w$ used in the projection essentially maximizes the gap between $P^-$ and $P^+$, and hence it is determined only by the points on the ``outside'' of $P^-$ and $P^+$. 
Specifically, if $P^-$ and $P^+$ can be separated, then the vector $w$ can be determined solely by the points on the convex hulls\footnote{The convex hull of a set of points is the smallest convex set that contains all of the points.} of $P^-$ and $P^+$, and is not influenced by the points that are interior to the convex hulls. 
If $P^-$ and $P^+$ are not separable, then $w$ will be determined by the points on the convex hulls of the correctly classified points. Thus, if the number of misclassifications is small, then many of the ``more interior'' points of $P^-$ and $P^+$ still do not contribute to determining the projection vector~$w$. 

\citet{obstruct} show that the projection along $w$ does eliminate the existence of a perfect linear classifier after projection (that is, there must be at least one misclassification), but this is by far not sufficient for linear guarding. We claim that we must also consider the points interior to the convex hulls of $P^-$ and $P^+$, if we wish to obtain an effective projection for linear guarding. 

We illustrate this claim with Figure~\ref{fig:convexhull_problem}, where $P^-$ and $P^+$ are colored red and blue, respectively. 
The black line indicates the hyperplane learned by a linear SVM and $w$ is the corresponding vector. 
The points at the top are distributed roughly evenly in the convex hull of their class (the colored polygon). Hence the projection along $w$ is very effective for linear guarding. 
The points at the bottom are distributed unevenly in the same convex hulls. 
The same projection along $w$ is now not very effective: red and blue points can still be separated fairly well by a linear classifier with relatively high accuracy.

We observe that the distribution of all points must be taken into account to compute a projection that is effective for linear guarding. 
Our two methods (MP and TMP) explore different ways of incorporating the distribution of the interior points when constructing the projection vector $w$.

\addtocounter{figure}{1}

\begin{figure*}[b]
    \centering
    \includegraphics{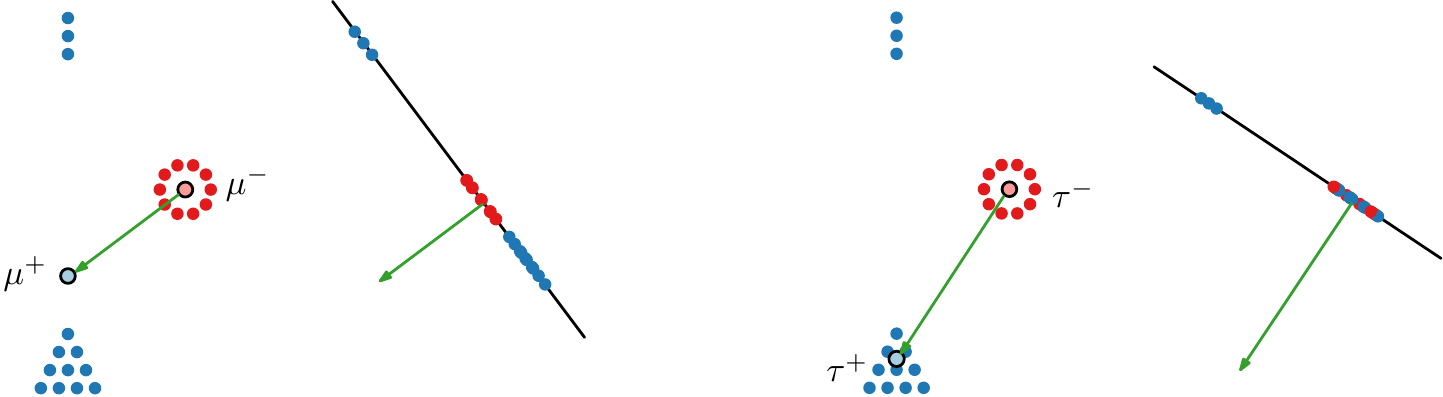}
    \caption{Results of MP (left) and TMP (right).}
    \label{fig:median_vs_mean}
\end{figure*}

\addtocounter{figure}{-2}

\subsection{Mean Projection (MP)}\label{subsec:MP}

Our general goal is to make the distribution of the points in $P^-$ and $P^+$ as similar as possible after the projection; that makes it difficult to distinguish the two sets using a linear classifier. 
One of the most characteristic values of a distribution is its \emph{mean}. 
Therefore, in the Mean Projection method we ensure that the means of $P^-$ and $P^+$ are identical after projection. 
Specifically, let $\mu^-$ be the mean of $P^-$ and let $\mu^+$ be the mean of $P^+$. 
We choose the projection vector simply as $w = \mu^+ - \mu^-$. 
Note that $\mu^+$ and $\mu^-$ (and hence $w$) can easily be computed efficiently by summing up the coordinates of the points in each point set and dividing by the number of points in that set. 

\paragraph{Analysis.} 
MP tends to be very effective, since the majority of the points are typically concentrated around their mean. 
Figure~\ref{fig:svm_vs_mean} compares INLP (top) and MP (bottom) in a scenario where MP clearly outperforms INLP with respect to linear guarding. 
However, we can also directly see a weakness of MP: the mean of a point set can be pulled into some direction by (few) outliers. 
Figure~\ref{fig:median_vs_mean} (left) shows another example of this phenomenon. 
Hence, there exist theoretical instances where MP performs quite poorly with respect to linear guarding.
However, we demonstrate in Section~\ref{sec:experiments} that MP is more effective than INLP in practice.

\subsection{Tukey Median Projection (TMP)}\label{subsec:tmp}
As illustrated above, the mean can be influenced by outliers. 
Ideally, we would hence like to identify a ``center point'' for each point set that has roughly the same number of points in every ``direction''. In 1D this center point is known as the \emph{median}. 
We therefore consider a higher-dimensional version of the median as our center point.     
    
Let $P$ be a point set in $\Reals^d$ and let $q \in \Reals^d$ be another point which is not necessarily in $P$. The \emph{Tukey depth} $t(q)$ of $q$ with respect to $P$ is the smallest number of points in $P$ in any closed halfspace bounded by a hyperplane in $\Reals^d$ that passes through~$q$. In Figure~\ref{fig:tukey_depth}, $q_1$ is a point with Tukey depth $3$ and $q_2$ is a point with Tukey depth $5$. A \emph{Tukey median} $\tau$ of a point set $P$ is a point $q \in \Reals^d$ with the maximum Tukey depth with respect to $P$ among all points in $\Reals^d$. Point $q_2$ in  Figure~\ref{fig:tukey_depth} is a Tukey median of the black points. 

Now let $\tau^-$ be a Tukey median of $P^-$ and let $\tau^+$ be a Tukey median of $P^+$. Then the projection vector of the Tukey Median Projection method is simply $w = \tau^+ - \tau^-$.

\begin{figure}[t]
    \centering
    \includegraphics{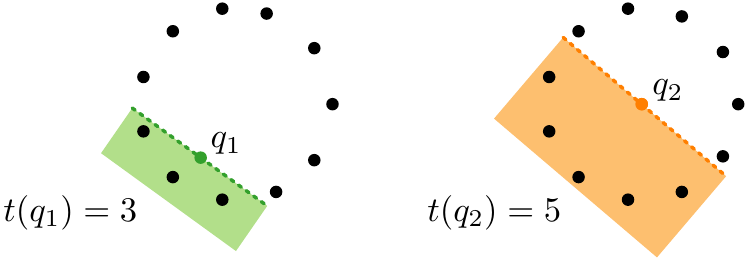}
    \caption{Two points and their Tukey depth.}
    \label{fig:tukey_depth}
\end{figure}

\paragraph{Analysis.} Figure~\ref{fig:median_vs_mean} contrasts MP and TMP. 
The mean of the blue point set is clearly not in the center of the majority of the points due to the outliers. As a result, the projection of MP is not very effective for linear guarding. The Tukey median of the blue point set, however, remains centered near the majority of the points and hence TMP is effective. 

One may wonder if there is also an example where TMP performs poorly. We can in fact prove that TMP in general performs quite well for linear guarding. Specifically, we can show that the number of misclassifications after TMP is at least the minimum of the Tukey depths of $\tau^+$ and $\tau^-$. It is known that, for every point set $P$ in $\Reals^d$ with $n$ points, the Tukey median with respect to $P$ has Tukey depth at least $\left\lceil\frac{n}{d+1}\right\rceil$ (see, e.g.,~\newcite{AlgCombGeom:87}). Finally, we can show that there exist point sets $P^+$ and $P^-$ such that, after any projection along a single vector, the number of misclassifications is at most $O(n/d)$. Thus, TMP is asymptotically worst-case optimal for maximizing the number of misclassifications after projection.
The exact theorem statements and their proofs can be found in Appendix~\ref{app:tukey_proof}.

Unfortunately, TMP has one major drawback: computing the Tukey median of a set of points is computationally expensive in high dimensions. Although it is possible to compute an approximate Tukey median efficiently, the Tukey depth of this approximation may be significantly worse than that of the Tukey median (namely, $O(n/d^2)$)\footnote{We are aiming to maximize the number of misclassifications after projection, so $O(n/d)$ is better than $O(n/d^2)$}. Hence, we generally recommend to use MP in practice.

\section{Experiments}\label{sec:experiments}

 In this section, we present our main experimental results.\footnote{The code is available at \href{https://github.com/tue-alga/debias-mean-projection}{tue-alga/debias-mean-projection}.} We first describe the outcome of the original INLP experiments and how our results compare (Section~\ref{ssec:original_ex}). We then outline the additional experiments we carried out to gain more insight into surprising results of the original INLP experiments as well as those of our own method (Section~\ref{ssec:divingdeeper}).
 
\begin{table*}[t]
    \centering
    \begin{tabular}{l|c|c|l}
        \textbf{Experiment} & \textbf{INLP} & \textbf{MP} & \textbf{Interpretation of results}\\ \hline
        Linear Guarding acc. & 34.9 \% & 34.14 \%  & Similar\\
        non-linear Classification acc. & 85.0\% & 81.6\% & MP slightly better\\
        Effect on Word Neighborhoods & - & - & MP better (more stable neighborhoods)\\
        Embedding Space simlex-999 corr. & 0.489 & 0.373 & INLP better (*)\\ \hline \hline
        WEAT 6 & 0.008 & 0.173 & INLP better (*)\\
        WEAT 7 & 0.084 & 0.191 & INLP better (*)\\
        WEAT 8 & -0.310 & 0.110 & MP better\\ \hline \hline
        Bias-by-neighbor & 73.4\% & 74.5\% & Similar\\ \hline \hline
        Gender in the wild BOW acc. & 77.1\% & 76.7\% & Similar\\
        Gender in the wild FastText acc. & 73.6\% & 75.6\% & Similar\\
        Gender in the wild BERT acc. & 74.7\% & 75.2\% & Similar\\
        Gender in the wild BOW TPR-GAP & 0.111 & 0.001 & MP better\\
        Gender in the wild FastText TPR-GAP & 0.103 & 0.092 & Similar\\
        Gender in the wild BERT TPR-GAP & 0.065 & 0.083\% & Similar\\
    \end{tabular}
    \caption{Summary of the results in Section~\ref{ssec:original_ex}: MP generally performs equal to or better than INLP. (*) In Section~\ref{ssec:divingdeeper} we show that adding random projections to~MP achieves the same performance as INLP.}
    \label{tab:results}
\end{table*}

\subsection{Original Experiments}\label{ssec:original_ex}

We compare the performance of INLP and MP on the gender bias experiments reported in Sections 6.1 and 6.3 in \newcite{nullitout}. We carried out all experiments with the original (INLP) and our new (MP) method. We could successfully reproduce Ravfogel et al.'s \citeyear{nullitout} results. Since the results were nearly identical, we report the INLP results from the original paper. For reasons of space, this section only contains the intuition and main results. We provide a self-contained description in Appendix~\ref{app:detailed_experiments}, where all experimental details can be found. We use the same settings and metrics as \citet{nullitout}, unless specified otherwise. Table~\ref{tab:results} summarises the results in this section.

\paragraph{General Settings.} 

We use the GloVe word embeddings~\cite{zhao-glove}\footnote{\texttt{glove.42B.300d}} and limit the dataset to the 150,000 most common words. We create the same classes of male, female and neutral embeddings as \newcite{nullitout} following their approach.\footnote{Gender is not binary. Our experiments are restricted to removing male and female gender bias because we follow the settings from \newcite{nullitout}. Our limitations and ethics sections provide more elaborate comments on this matter.} A male vector $\vec{M}$ is defined as $\vec{he} - \vec{she}$ and a female vector $\vec{F}$ as $\vec{she} - \vec{he}$. Our male dataset consists of the 7500 data points closest to $\vec{M}$ and our female dataset to the 7500 data points closest to $\vec{F}$. We use the same random selection of 7500 neutral words with a cosine similarity of less than 0.3 to $\vec{M}$ as \newcite{nullitout}. 

We include experiments on two classes (\emph{feminine} and \emph{masculine}) and on three classes (\emph{feminine}, \emph{masculine} and \emph{neutral}). The INLP method uses the datasets to train SVM classifiers as described above. For the MP method, let $\vec{v_F}$, $\vec{v_M}$ and $\vec{v_N}$ be the mean of respectively feminine, masculine and neutral labeled points of a given set $D$. Then in experiments on two classes the mean projection method uses the $\vec{v_F} - \vec{v_M}$ vector for projection. For experiments with all three classes, we use both $\vec{v_N} - \vec{v_F}$ and $\vec{v_N} - \vec{v_M}$ vectors for projection.

\paragraph{Linear Guarding.} We first compare the process of obtaining linear guarding using INLP and MP. Figure~\ref{fig:acc_compare_three_classes} illustrates the decrease in accuracy of a linear classifier trained to identify gender. The Mean Projection method brings down the accuracy to 34.18\%. It takes INLP 12 iterations to reach a comparable result of 34.9\% or at least 6 iterations to reach 39.9\% accuracy. Figure~\ref{fig:acc_compare_two_classes_app} in Appendix~\ref{app:detailed_experiments} shows a similar pattern for binary classification, where MP reaches 50.6\% and INLP needs 14 iterations to drop to 50.51\%.

\begin{figure}
    \centering
    \includegraphics{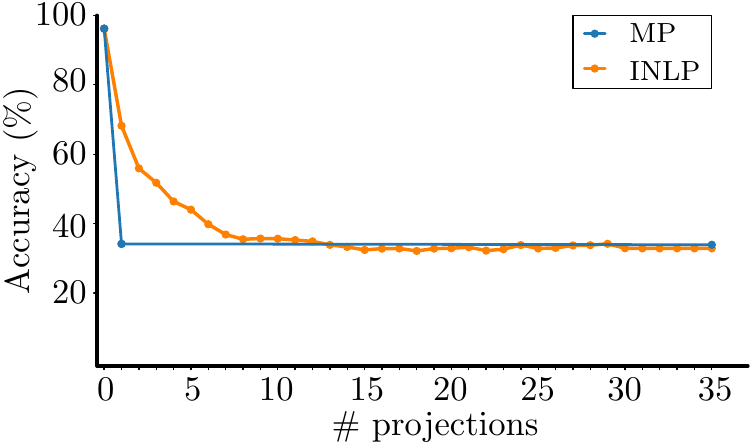}
    \caption{Comparing accuracy of INLP and MP on \textbf{three} classes of feminine, masculine and neutral words.}
    \label{fig:acc_compare_three_classes}
\end{figure}

\paragraph{Classification.} Both methods aim to remove linearly encoded information. \newcite{nullitout} test to what extent the non-linear encoding of gender remains intact by running a 1-hidden-layered MLP with ReLU activation and report 85.0\% after 35 iterations with INLP. After a single MP projection, accuracy of the MLP drops to 81.6\%.

\paragraph{Effect on the Embedding Space.} \newcite{nullitout} show changes in the three nearest neighbors of 40 randomly selected words and 22 gendered given names (see Tables~\ref{tab:emb_space_random} and~\ref{tab:emb_space_gendered_name} in Appendix~\ref{sec:embspacechange}). MP keeps these environments more stable changing 8 out of 120 neighbors compared to 43 for INLP for random words. MP changes 24 and INLP 56 tokens out of 66 neighbors of the given names. We also observe that 14 of the 24 changes are other given names (5 of which of opposite gender). With INLP, 46 new tokens are not regularly spelled names.

\newcite{nullitout} also investigate how INLP impacts semantic similarity scores on multiple datasets reporting an increase of results, e.g.\ from 0.373 to 0.489 on simlex-999. Applying MP has less impact with a result of 0.385. This confirms that MP has less impact on the overall space, but also raises the question whether the additional changes caused by INLP's extra projections is a positive effect. We investigate this in Section~\ref{ssec:divingdeeper}.

\paragraph{WEAT.} \citet{weat} evaluate bias in embeddings by comparing the distance of known stereotypical terms to terms that are either explicitly male or explicitly female. A higher WEAT score means that stereotypical terms are indeed closer to their stereotyped gender. Like \newcite{nullitout} we use gendered names for representing attributes and take the targets of \emph{WEAT 6},  \emph{7}, and \emph{8}. MP achieves WEAT scores of 0.173, 0.191 and 0.110 respectively. The first INLP iteration yields scores of 0.278, 0.347, 0.203 and the full 35 iterations reach 0.008, 0.084 and -0.310 respectively. We investigate this in Section~\ref{ssec:divingdeeper}.

\paragraph{Bias-by-neighbor.} \newcite{lipstick} propose an evaluation that checks whether the 100 nearest neighbors of a term after debiasing carried the same gender bias before debiasing. \newcite{nullitout} show that INLP reduces the gender bias of Bolukbasi et al.'s \citeyear{bolukbasi} set of stereotypical professions from 85.2\% to 73.4\%. MP reaches a comparable reduction (74.5\%).

\paragraph{TPR-GAP: ``gender in the wild''.} \citet{biasbios} propose an approach that measures to what extent debiasing methods can remove gender bias from a system that predicts occupations based on biographies. A fair system should have equal performance for members of different classes and not amplify a bias that is present in the label distribution \cite{hardt2016equality}.  In the case of gender, the system should perform equally well predicting occupations such as surgeon, caretaker, secretary or marine, regardless of whether the biographical texts are about men or women. \citet{biasbios} use the $GAP^{TPR}$ score to measure this. The GAP$_{fem}$, e.g., quantifies to what extent a given occupation is more probable to be predicted correctly for biographies of women compared to biographies of men. A good debiasing approach should obtain a $GAP^{TPR}$ score that is close to zero while maintaining the classification accuracy that was obtained before debiasing.

Instead of directly debiasing embeddings, we debiase representations of entire biographies within the occupation classification system. Like~\citet{nullitout}, we use logistic classifiers that take one of three representations of the biographies as input: (1) one-hot BOW, (2) averaged FastText embeddings \citep{fasttext} and (3) the last hidden state of BERT \cite{devlin2019bert}. The projections are applied using \newcite{pedregosa2011scikit}.

In both approaches, biography representations are debiased on the basis of their gender nullspace for each of the 28 occupations. For the MP setup, we create a mean projection for every occupation $o$ by identifying the projection vector $P_o$ based on the mean of all female and the mean of all male biographies with occupation $o$.  The INLP setup uses logistic regression in 100 iterations for the BOW representations, 150 linear SVM iterations for FastText representations and 300 linear SVM iterations of for the BERT.

Overall, results of both methods are comparable. INLP maintains a higher accuracy on identifying professions for one-hot BOW (77.1\% vs 76.7\%), where MP has higher accuracy for FastText (75.6\% vs 73.6\%) and BERT (75.2\% vs. 74.7\%). For BOW, the GAP score drops by 99.4\% when using MP compared to 35\% for INLP. Differences for other representations are smaller: MP reduces GAP by 49.5\% on FastText compared to INLP's 43.4\%. For the BERT model, INLP outperforms MP reducing the GAP by 64.3\% compared to MP's 52.2\%. See Table~\ref{tab:biography_results} in Appendix~\ref{app:detailed_experiments} for the full results.

\paragraph{Summary.} In general, we observe that a single MP projection achieves linear guarding where multiple INLP projections are needed and that the rest of the space remains more stable. The improvement of similarity results, as well as INLP's results on WEAT, raise further questions. \newcite{nullitout} hypothesize that the improvied similarity results may be due to a significant gender component in embeddings that does not correlate with human similarity judgment \cite[p.\ 7253]{nullitout}. Together with the increased drop in WEAT, this may point to 35 INLP iterations removing more gender information than the single MP projection. This is countered by the result of the non-linear classifier where MP induced a larger decrease than INLP. We investigate the results on similarity and WEAT further in the next subsection.


\subsection{Diving Deeper}\label{ssec:divingdeeper}

\paragraph{Simlex-999.} We first dive into the impact of INLP on simlex-999 \cite{hill2015simlex}. Figure~\ref{fig:simlex_compare} illustrates the changes in similarity scores (Correlation) after every iteration of INLP compared to MP (one projection). We observe that the scores mainly increase between the 8th and 14th iteration, remaining relatively stable before and afterwards. Going back to Figure~\ref{fig:acc_compare_three_classes}, we observe that this increase thus starts when the INLP projections have almost dropped to majority class accuracy, i.e.\ linear guarding. The main increase in similarity scores thus occurs \textbf{after} gender encoding has been removed, which could imply that the increased result is not related to gender. We investigate this by comparing the impact of INLP iterations to iterations that project the dataset along random vectors.\footnote{Weighted by eigenvalues of all principle components i.e. we choose a random direction from the span of the data.}

\begin{figure}[b]
    \centering
    \includegraphics{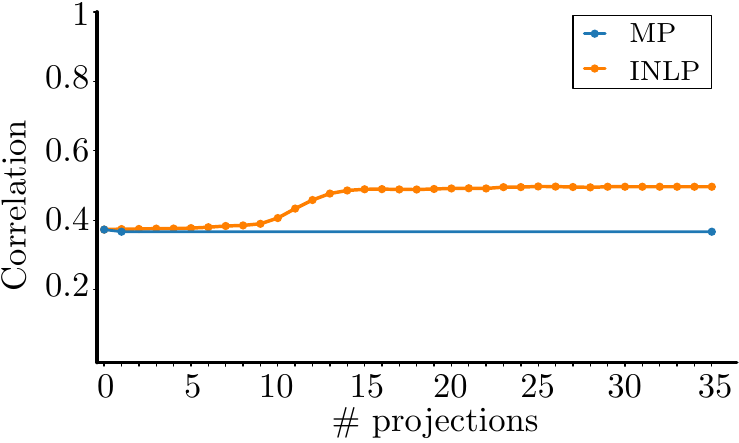}
    \caption{Change in simlex-999 scores for MP and INLP after every iteration.}
    \label{fig:simlex_compare}
\end{figure}

We compare three scenarios: directly applying 35 iterations with random projections to the original model [Random], adding 34 random projection iterations after one MP projection [MP+R] and adding 27 random projection iterations to 8 INLP projections (the point where the score starts to increase) [INLP-8+R]. We carry out 500 runs of this experiment and report the 95\% confidence interval of similarity correlation scores for each setting in Table~\ref{tab:correlations}. Running 35 iterations of INLP increased the semantic similarity score from 0.373 to 0.489. This improvement falls within the confidence interval of MP+R and INLP-8+R. Results of random projections only [Random] do not reach this score, but still clearly improve compared to the original 0.373 with a mean score of 0.447. We thus conclude that the improvement in similarity scores are due to reducing dimensions in general rather than removing (partial) representations of gender.

\begin{table}[t]
    \centering
    \begin{tabu} {X[1.2,l]  X[1.2 c] X[0.8 c] X[0.9 c]} 
                 &  $95\%$ CI & Mean & Std.  \\
         INLP-8+R  & $[0.47, 0.50]$    &  $0.486$  &  $0.009$ \\
         MP+R     & $[0.46, 0.50]$    &  $0.478$  &  $0.009$   \\
         Random   & $[0.43, 0.46]$    &  $0.447$  &  $0.009$
    \end{tabu}
    \caption{Simlex-999 scores for random projections.}
    \label{tab:correlations}
\end{table}

\paragraph{WEAT.} We investigate the WEAT scores in a similar manner. We first inspect the impact on the WEAT score per INLP iteration and then compare this to the impact of projections along weighted random vectors. Figure~\ref{fig:weat_random} illustrates the impact of applying INLP for 35 iterations (blue line) and 34 random projection iterations applied after applying MP method (500 runs). Recall that an ideal WEAT score would be close to zero. We observe that MP+R on average ends up with an equal score for \emph{WEAT 6}, a slightly higher positive (thus worse) score for \emph{WEAT 7} and close to zero for \emph{WEAT 8} where INLP results in a negative score. 

\begin{figure*}[h]
    \centering
    \includegraphics{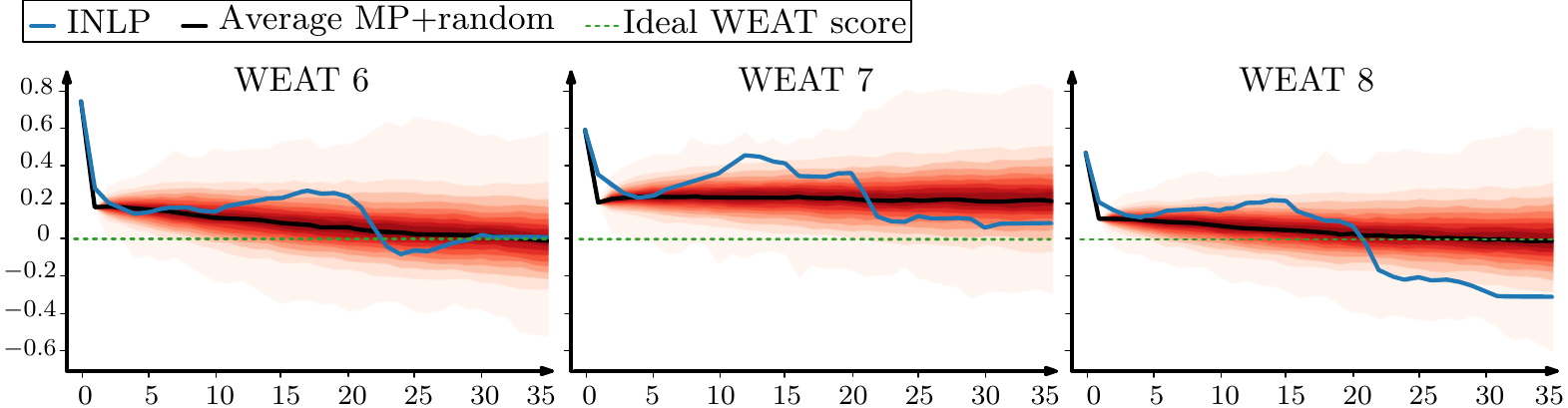}
    \caption{Plot of 500 runs where we first apply MP and then continue with 34 iterations of random projections. The shades of red indicate various levels of confidence (10 levels) in WEAT scores ($y$-axis) after a certain number of iterations ($x$-axis): darker red shades reflect WEAT scores occurring with a higher probability.} 
    \label{fig:weat_random}
\end{figure*}

We also compare the effect of applying random projections after 8 INLP iterations. INLP-8+R shows similar results as MP+R. Applying random projections from the start [Random] leads to slightly increasing WEAT scores in most cases (see Figures~\ref{fig:inlp_weat_random} and~\ref{fig:weat_random_app} in Appendix~\ref{app:diving_even_deeper}). This implies necessity of the  first step in removing gender to make a reduction of WEAT by random projections. 

We now turn back to what this means for iterative nullspace projections. When inspecting the blue lines in Figure~\ref{fig:weat_random}, we observe that all WEAT scores initially drop to the same level as with MP after 2-4 iterations and then increase again. Between 20 and 25 iterations we observe a sharp drop, even flipping to a bias in the opposite direction for \emph{WEAT 8}. These patterns do not reveal a consistent positive impact of applying additional nullspace projections. 
\citet{ethayarajh-etal-2019-understanding} report that results obtained by applying WEAT are brittle and that statistically significant results in opposite directions can be obtained depending on the attribute words selected. They show this in a highly simplified setting (using only one attribute word at the time). Nevertheless, the WEAT sets are relatively small (8 attributes and targets per set) and some of the results we see could be the effect of the specific selection of terms in the set. Overall, this makes the results even more difficult to interpret. We can say, however, that a single MP and the first INLP projections seem beneficial for decreasing WEAT and there is no evidence that continuing iterations of INLP further removes gender bias.

\paragraph{Summary.} Our further investigations showed that (1) increases in simlex-999 scores occur very locally after gender has been removed and that (2) both for similarity and WEAT, random projections after MP or a limited number of INLP iterations lead to similar results as 35 INLP iterations. We therefore conclude that the side effects of continuing to apply INLP are not related to the overall goal of removing encoding of gender, but rather related to the overall effect of dimension reduction.


\section{Conclusion}\label{sec:conclusion}

This paper started from the idea that one targeted projection results in similar linear guarding and more stability of the remaining space compared to iterative nullspace projections. We proposed two methods to find such projections: Mean Projection (MP) and Tukey Median Projection (TMP). We compared performance of Mean Projection, which is still susceptible to outliers but more efficient to compute than TMP, to the gender based experiments using 35 Iterative Nullspace Projections (INLP) reported in~\newcite{nullitout}. 

Our results show that MP obtains comparable linear guarding in one projection where INLP requires multiple iterations. They also confirm that the rest of the space remains more stable through fewer changes in nearest neighbors of randomly chosen words and similarity scores that are closer to the original scores compared to INLP. Two results led to further questions: similarity scores improved after applying INLP and INLP resulted in better \emph{WEAT} scores than MP. We conducted additional experiments to test whether the extra debiasing iterations performed in INLP end up removing more subtle representations of gender bias that are missed by the single MP iteration. 


The results show that it is unlikely that INLP can indeed remove more subtle representations of bias for two reasons: (1) we observed that most effects occur once linear guarding has already been achieved. (2) More importantly, similar performance is obtained by running random projections after either MP or the first 8 INLP iterations. 
We therefore conclude that these additional effects are not related to removing representations of the targeted attribute, but rather a side effect of reducing the dimensionality of the space. This leads us to the overall conclusion that a single targeted projection is indeed preferable over iterative nullspace projections when aiming to remove an attribute. This finding is particular important for the recent line of research that uses INLP for interpretability: these studies test the effect of removing a target attribute and their conclusions should not be confounded by other changes to the space. 

Our theoretical findings in Section~\ref{sec:problem} suggest that MP might produce poor results for skewed data. Hence we plan to analyze MP more rigorously on synthetic data, specifically, data where the points are not centered around the mean. In this way we hope to identify scenarios where MP produces poor projections; subsequently we can test if TMP produces better projections for the same scenarios.

\section{Limitations}

The methods we described and evaluated in this paper have the following limitations.

\paragraph{Dependence on labelled attribute data.} The results of all projection methods (INLP, MP and TMP) directly depend on the representativeness of the labelled data points that are used to identify the projections. All methods would suffer from non-representative points. In addition, outliers can hamper the effectiveness of MP. 

\paragraph{Binary gender.} The binary perspective on gender used in our experiments does not  reflect the  reality of non-binary gender identities. These experiments focus on gender bias in the form of stereotypical associations with binary gender identities. The underrepresentation of non-binary gender references (e.g.\ singular \textit{they}, \textit{them} in English) is another form of bias that also affects NLP systems. Recent work has shown that current NLP technologies indeed have difficulties dealing with non-binary gender references \cite[e.g.]{dev2021harms,brandl-etal-2022-conservative}. To our knowledge, the question of whether language models also contain stereotypes around non-binary gender has not been addressed yet. If such bias is indeed present, data sparseness is likely to pose a problem for the methods described in this paper.

\paragraph{Suitability for language with grammatical gender is unclear} For the application of gender bias, we note that all experiments are run on English. In particular for languages with grammatical gender, it is unclear whether these projections would be able to identify semantic gender bias correctly while avoiding to remove grammatical characteristics of the language: words with the same semantic gender will almost always also have the same grammatical gender. Projections are likely to remove this highly distinctive feature which is also a grammatical property. Its removal may impact downstream applications.

\paragraph{Linear guarding only.} The methods described here aim only at (and achieve only) linear guarding. This means that non-linear representations of the attributes remain present in the data (e.g.\ the one-hidden layered perceptron still achieves an accuracy of 85.0\% after 35 INLP iterations and 81.6\% after applying MP on binary classification).

\section{Ethics}

We applied our method to gender debiasing. The approach aims to remove bias which can be seen as helping to address an ethical issue, rather than causing one. Nevertheless, we feel it is important to point out the following. When using the methods discussed in this paper to remove a protected attribute, with the idea to avoid harmful bias, the limitations outlined above \textbf{must} be taken into account. This concerns both potential limitations of the effectiveness of the approach and the treatment of gender as a binary variable. Concerning effectiveness, this means considering the fact that (1) non-linear representations are not addressed by these methods and (2) the success of the approach is directly dependent on the quality of data used to identify the projections.  Concerning the treatment of non-binary gender, it should be kept in mind that (1) to our knowledge, not much is known yet about stereotypical representations of non-binary gender in language models (2) it is known that current NLP methods experience difficulties in dealing with non-binary gender. In general, debiasing methods must always be carefully evaluated to avoid that users of a system assume the problem is solved, when this is not (completely) the case. Ideally, such evaluation should look beyond the scope of the target. When dealing with gender, this should include looking at what models are doing with non-binary gender references.

In practice, the considerations outlined above mean the following: We do not claim that any of the approaches presented in this paper can fully remove bias from embedding representations. When using debiased embeddings as part of a system, we strongly advise to conduct a critical evaluation with respect to potentially remaining bias using data that are representative of the use-case. This is particularly important if the use-case encompasses the potential for discrimination (e.g.\ automatic CV analysis). 



\bibliography{anthology,custom}

\begin{thebibliography}{35}
\expandafter\ifx\csname natexlab\endcsname\relax\def\natexlab#1{#1}\fi

\bibitem[{Babazhanova et~al.(2021)Babazhanova, Tezekbayev, and
  Assylbekov}]{babazhanova2021geometric}
Madina Babazhanova, Maxat Tezekbayev, and Zhenisbek Assylbekov. 2021.
\newblock Geometric probing of word vectors.
\newblock In \emph{29th European Symposium on Artificial Neural Networks,
  Computational Intelligence and Machine Learning, ESANN 2021}, pages 587--592.
  i6doc. com publication.

\bibitem[{Bolukbasi et~al.(2016)Bolukbasi, Chang, Zou, Saligrama, and
  Kalai}]{bolukbasi}
Tolga Bolukbasi, Kai-Wei Chang, James~Y Zou, Venkatesh Saligrama, and Adam~T
  Kalai. 2016.
\newblock \href
  {https://proceedings.neurips.cc/paper/2016/file/a486cd07e4ac3d270571622f4f316ec5-Paper.pdf}
  {Man is to computer programmer as woman is to homemaker? debiasing word
  embeddings}.
\newblock In \emph{Advances in Neural Information Processing Systems},
  volume~29. Curran Associates, Inc.

\bibitem[{Brandl et~al.(2022)Brandl, Cui, and
  S{\o}gaard}]{brandl-etal-2022-conservative}
Stephanie Brandl, Ruixiang Cui, and Anders S{\o}gaard. 2022.
\newblock \href {https://doi.org/10.18653/v1/2022.naacl-main.265} {How
  conservative are language models? adapting to the introduction of
  gender-neutral pronouns}.
\newblock In \emph{Proceedings of the 2022 Conference of the North American
  Chapter of the Association for Computational Linguistics: Human Language
  Technologies}, pages 3624--3630, Seattle, United States. Association for
  Computational Linguistics.

\bibitem[{Caliskan et~al.(2017)Caliskan, Bryson, and Narayanan}]{weat}
Aylin Caliskan, {Joanna J} Bryson, and Arvind Narayanan. 2017.
\newblock \href {https://doi.org/10.1126/science.aal4230} {Semantics derived
  automatically from language corpora contain human-like biases}.
\newblock \emph{Science}, 356(6334):183--186.

\bibitem[{Celikkanat et~al.(2020)Celikkanat, Virpioja, Tiedemann, and
  Apidianaki}]{celikkanat-etal-2020-controlling}
Hande Celikkanat, Sami Virpioja, J{\"o}rg Tiedemann, and Marianna Apidianaki.
  2020.
\newblock \href {https://doi.org/10.18653/v1/2020.blackboxnlp-1.13}
  {Controlling the imprint of passivization and negation in contextualized
  representations}.
\newblock In \emph{Proceedings of the Third BlackboxNLP Workshop on Analyzing
  and Interpreting Neural Networks for NLP}, pages 136--148, Online.
  Association for Computational Linguistics.

\bibitem[{Dankers et~al.(2022)Dankers, Lucas, and Titov}]{dankers2022can}
Verna Dankers, Christopher Lucas, and Ivan Titov. 2022.
\newblock Can transformer be too compositional? analysing idiom processing in
  neural machine translation.
\newblock In \emph{Proceedings of the 60th Annual Meeting of the Association
  for Computational Linguistics (Volume 1: Long Papers)}, pages 3608--3626.

\bibitem[{De-Arteaga et~al.(2019)De-Arteaga, Romanov, Wallach, Chayes, Borgs,
  Chouldechova, Geyik, Kenthapadi, and Kalai}]{biasbios}
Maria De-Arteaga, Alexey Romanov, Hanna Wallach, Jennifer Chayes, Christian
  Borgs, Alexandra Chouldechova, Sahin Geyik, Krishnaram Kenthapadi, and
  Adam~Tauman Kalai. 2019.
\newblock \href {https://doi.org/10.1145/3287560.3287572} {Bias in bios: A case
  study of semantic representation bias in a high-stakes setting}.
\newblock In \emph{Proceedings of the Conference on Fairness, Accountability,
  and Transparency}, FAT* '19, page 120–128. Association for Computing
  Machinery.

\bibitem[{Dev et~al.(2021{\natexlab{a}})Dev, Li, Phillips, and
  Srikumar}]{dev-etal-2021-oscar}
Sunipa Dev, Tao Li, Jeff~M Phillips, and Vivek Srikumar. 2021{\natexlab{a}}.
\newblock \href {https://doi.org/10.18653/v1/2021.emnlp-main.411} {{OSC}a{R}:
  Orthogonal subspace correction and rectification of biases in word
  embeddings}.
\newblock In \emph{Proceedings of the 2021 Conference on Empirical Methods in
  Natural Language Processing}, pages 5034--5050, Online and Punta Cana,
  Dominican Republic. Association for Computational Linguistics.

\bibitem[{Dev et~al.(2021{\natexlab{b}})Dev, Monajatipoor, Ovalle, Subramonian,
  Phillips, and Chang}]{dev2021harms}
Sunipa Dev, Masoud Monajatipoor, Anaelia Ovalle, Arjun Subramonian, Jeff
  Phillips, and Kai-Wei Chang. 2021{\natexlab{b}}.
\newblock Harms of gender exclusivity and challenges in non-binary
  representation in language technologies.
\newblock In \emph{Proceedings of the 2021 Conference on Empirical Methods in
  Natural Language Processing}, pages 1968--1994.

\bibitem[{Devlin et~al.(2019)Devlin, Chang, Lee, and
  Toutanova}]{devlin2019bert}
Jacob Devlin, Ming-Wei Chang, Kenton Lee, and Kristina Toutanova. 2019.
\newblock Bert: Pre-training of deep bidirectional transformers for language
  understanding.
\newblock In \emph{Proceedings of the 2019 Conference of the North American
  Chapter of the Association for Computational Linguistics: Human Language
  Technologies, Volume 1 (Long and Short Papers)}, pages 4171--4186.

\bibitem[{Elazar et~al.(2021)Elazar, Ravfogel, Jacovi, and
  Goldberg}]{elazar2021amnesic}
Yanai Elazar, Shauli Ravfogel, Alon Jacovi, and Yoav Goldberg. 2021.
\newblock Amnesic probing: Behavioral explanation with amnesic counterfactuals.
\newblock \emph{Transactions of the Association for Computational Linguistics},
  9:160--175.

\bibitem[{Ethayarajh et~al.(2019)Ethayarajh, Duvenaud, and
  Hirst}]{ethayarajh-etal-2019-understanding}
Kawin Ethayarajh, David Duvenaud, and Graeme Hirst. 2019.
\newblock \href {https://doi.org/10.18653/v1/P19-1166} {Understanding
  undesirable word embedding associations}.
\newblock In \emph{Proceedings of the 57th Annual Meeting of the Association
  for Computational Linguistics}, pages 1696--1705, Florence, Italy.
  Association for Computational Linguistics.

\bibitem[{Gonen and Goldberg(2019)}]{lipstick}
Hila Gonen and Yoav Goldberg. 2019.
\newblock \href {https://doi.org/10.18653/v1/N19-1061} {Lipstick on a pig:
  {D}ebiasing methods cover up systematic gender biases in word embeddings but
  do not remove them}.
\newblock In \emph{Proceedings of the 2019 Conference of the North {A}merican
  Chapter of the Association for Computational Linguistics: Human Language
  Technologies, Volume 1 (Long and Short Papers)}, pages 609--614, Minneapolis,
  Minnesota. Association for Computational Linguistics.

\bibitem[{Gonen et~al.(2022)Gonen, Ravfogel, and Goldberg}]{gonen2022analyzing}
Hila Gonen, Shauli Ravfogel, and Yoav Goldberg. 2022.
\newblock Analyzing gender representation in multilingual models.
\newblock In \emph{Proceedings of the 7th Workshop on Representation Learning
  for NLP}, pages 67--77.

\bibitem[{Haghighatkhah et~al.(2021)Haghighatkhah, Meulemans, Speckmann,
  Urhausen, and Verbeek}]{obstruct_mfcs}
Pantea Haghighatkhah, Wouter Meulemans, Bettina Speckmann, J{\'e}r{\^o}me
  Urhausen, and Kevin Verbeek. 2021.
\newblock \href {https://doi.org/10.4230/LIPIcs.MFCS.2021.56} {Obstructing
  classification via projection}.
\newblock In \emph{46th International Symposium on Mathematical Foundations of
  Computer Science, MFCS 2021}, Leibniz International Proceedings in
  Informatics, LIPIcs. Schloss Dagstuhl - Leibniz-Zentrum f{\"u}r Informatik.

\bibitem[{Haghighatkhah et~al.(2022)Haghighatkhah, Meulemans, Speckmann,
  Urhausen, and Verbeek}]{obstruct}
Pantea Haghighatkhah, Wouter Meulemans, Bettina Speckmann, J{\'{e}}r{\^{o}}me
  Urhausen, and Kevin Verbeek. 2022.
\newblock \href {https://doi.org/https://doi.org/10.57717/cgt.v1i1.8}
  {Obstructing classification via projection}.
\newblock In \emph{Computing in Geometry and Topology Journal}, volume~1.

\bibitem[{Hardt et~al.(2016)Hardt, Price, and Srebro}]{hardt2016equality}
Moritz Hardt, Eric Price, and Nati Srebro. 2016.
\newblock Equality of opportunity in supervised learning.
\newblock \emph{Advances in neural information processing systems}, 29.

\bibitem[{{Herbert Edelsbrunner}(1987)}]{AlgCombGeom:87}
{Herbert Edelsbrunner}. 1987.
\newblock \emph{Algorithms in Combinatorial Geometry}.
\newblock Springer-Verlag, New York.

\bibitem[{Hill et~al.(2015)Hill, Reichart, and Korhonen}]{hill2015simlex}
Felix Hill, Roi Reichart, and Anna Korhonen. 2015.
\newblock Simlex-999: Evaluating semantic models with (genuine) similarity
  estimation.
\newblock \emph{Computational Linguistics}, 41(4):665--695.

\bibitem[{Joulin et~al.(2017)Joulin, Grave, Bojanowski, and Mikolov}]{fasttext}
Armand Joulin, Edouard Grave, Piotr Bojanowski, and Tomas Mikolov. 2017.
\newblock \href {https://aclanthology.org/E17-2068} {Bag of tricks for
  efficient text classification}.
\newblock In \emph{Proceedings of the 15th Conference of the {E}uropean Chapter
  of the Association for Computational Linguistics: Volume 2, Short Papers},
  pages 427--431, Valencia, Spain. Association for Computational Linguistics.

\bibitem[{Lovering and Pavlick(2022)}]{lovering2022unit}
Charles Lovering and Ellie Pavlick. 2022.
\newblock Unit testing for concepts in neural networks.
\newblock \emph{arXiv preprint arXiv:2208.10244}.

\bibitem[{Madras et~al.(2018)Madras, Creager, Pitassi, and
  Zemel}]{fair_adversarial}
David Madras, Elliot Creager, Toniann Pitassi, and Richard Zemel. 2018.
\newblock \href {http://arxiv.org/abs/1802.06309} {Learning adversarially fair
  and transferable representations}.
\newblock In \emph{International Conference on Machine Learning}, pages
  3384--3393.

\bibitem[{Nikoulina et~al.(2021)Nikoulina, Tezekbayev, Kozhakhmet, Babazhanova,
  Gall{\'e}, and Assylbekov}]{nikoulina2021rediscovery}
Vassilina Nikoulina, Maxat Tezekbayev, Nuradil Kozhakhmet, Madina Babazhanova,
  Matthias Gall{\'e}, and Zhenisbek Assylbekov. 2021.
\newblock The rediscovery hypothesis: Language models need to meet linguistics.
\newblock \emph{Journal of Artificial Intelligence Research}, 72:1343--1384.

\bibitem[{Pedregosa et~al.(2011)Pedregosa, Varoquaux, Gramfort, Michel,
  Thirion, Grisel, Blondel, Prettenhofer, Weiss, Dubourg
  et~al.}]{pedregosa2011scikit}
Fabian Pedregosa, Ga{\"e}l Varoquaux, Alexandre Gramfort, Vincent Michel,
  Bertrand Thirion, Olivier Grisel, Mathieu Blondel, Peter Prettenhofer, Ron
  Weiss, Vincent Dubourg, et~al. 2011.
\newblock Scikit-learn: Machine learning in python.
\newblock \emph{the Journal of machine Learning research}, 12:2825--2830.

\bibitem[{Ravfogel et~al.(2020)Ravfogel, Elazar, Gonen, Twiton, and
  Goldberg}]{nullitout}
Shauli Ravfogel, Yanai Elazar, Hila Gonen, Michael Twiton, and Yoav Goldberg.
  2020.
\newblock \href {https://doi.org/10.18653/v1/2020.acl-main.647} {Null it out:
  Guarding protected attributes by iterative nullspace projection}.
\newblock In \emph{Proceedings of the 58th Annual Meeting of the Association
  for Computational Linguistics, {ACL} 2020, Online, July 5-10, 2020}, pages
  7237--7256. Association for Computational Linguistics.

\bibitem[{Ravfogel et~al.(2022)Ravfogel, Twiton, Goldberg, and
  Cotterell}]{rlace-ravfogel22a}
Shauli Ravfogel, Michael Twiton, Yoav Goldberg, and Ryan~D Cotterell. 2022.
\newblock \href {https://doi.org/https://doi.org/10.48550/arXiv.2201.12091}
  {Linear adversarial concept erasure}.
\newblock In \emph{Proceedings of the 39th International Conference on Machine
  Learning (ICML)}, volume 162 of \emph{Proceedings of Machine Learning
  Research}, pages 18400--18421. PMLR.

\bibitem[{Romanov et~al.(2019)Romanov, De-Arteaga, Wallach, Chayes, Borgs,
  Chouldechova, Geyik, Kenthapadi, Rumshisky, and Kalai}]{romanov}
Alexey Romanov, Maria De-Arteaga, Hanna Wallach, Jennifer Chayes, Christian
  Borgs, Alexandra Chouldechova, Sahin Geyik, Krishnaram Kenthapadi, Anna
  Rumshisky, and Adam Kalai. 2019.
\newblock \href {https://doi.org/10.18653/v1/N19-1424} {What{'}s in a name?
  {R}educing bias in bios without access to protected attributes}.
\newblock In \emph{Proceedings of the 2019 Conference of the North {A}merican
  Chapter of the Association for Computational Linguistics: Human Language
  Technologies, Volume 1 (Long and Short Papers)}, pages 4187--4195,
  Minneapolis, Minnesota. Association for Computational Linguistics.

\bibitem[{Rosenberg and Hirschberg(2007)}]{rosenberg2007v}
Andrew Rosenberg and Julia Hirschberg. 2007.
\newblock V-measure: A conditional entropy-based external cluster evaluation
  measure.
\newblock In \emph{Proceedings of the 2007 joint conference on empirical
  methods in natural language processing and computational natural language
  learning (EMNLP-CoNLL)}, pages 410--420.

\bibitem[{Shao et~al.(2022)Shao, Ziser, and Cohen}]{gold_glitter}
Shun Shao, Yftah Ziser, and Shay~B. Cohen. 2022.
\newblock \href {https://doi.org/10.48550/arXiv.2203.07893} {Gold doesn't
  always glitter: Spectral removal of linear and nonlinear guarded attribute
  information}.
\newblock \emph{CoRR}, abs/2203.07893.

\bibitem[{Sommerauer and Fokkens(2018)}]{sommerauer2018firearms}
Pia Sommerauer and Antske Fokkens. 2018.
\newblock Firearms and tigers are dangerous, kitchen knives and zebras are not:
  Testing whether word embeddings can tell.
\newblock In \emph{Proceedings of the 2018 EMNLP Workshop BlackboxNLP:
  Analyzing and Interpreting Neural Networks for NLP}, pages 276--286.

\bibitem[{Vargas and Cotterell(2020)}]{kernel}
Francisco Vargas and Ryan Cotterell. 2020.
\newblock \href {https://doi.org/10.18653/v1/2020.emnlp-main.232} {Exploring
  the linear subspace hypothesis in gender bias mitigation}.
\newblock In \emph{Proceedings of the 2020 Conference on Empirical Methods in
  Natural Language Processing (EMNLP)}, pages 2902--2913, Online. Association
  for Computational Linguistics.

\bibitem[{Zhang et~al.(2018)Zhang, Lemoine, and Mitchell}]{adversarial_debias}
Brian~Hu Zhang, Blake Lemoine, and Margaret Mitchell. 2018.
\newblock \href {https://doi.org/10.1145/3278721.3278779} {Mitigating unwanted
  biases with adversarial learning}.
\newblock In \emph{Proceedings of the 2018 AAAI/ACM Conference on AI, Ethics,
  and Society}, AIES '18, page 335–340.

\bibitem[{Zhang et~al.(2022)Zhang, Wang, Lin, Zhang, and
  Zong}]{zhang2022probing}
Xiaohan Zhang, Shaonan Wang, Nan Lin, Jiajun Zhang, and Chengqing Zong. 2022.
\newblock Probing word syntactic representations in the brain by a feature
  elimination method.
\newblock In \emph{Proceedings of the AAAI Conference on Artificial
  Intelligence}.

\bibitem[{Zhao et~al.(2018{\natexlab{a}})Zhao, Wang, Yatskar, Ordonez, and
  Chang}]{debias_dataset}
Jieyu Zhao, Tianlu Wang, Mark Yatskar, Vicente Ordonez, and Kai-Wei Chang.
  2018{\natexlab{a}}.
\newblock \href {https://doi.org/10.18653/v1/N18-2003} {Gender bias in
  coreference resolution: Evaluation and debiasing methods}.
\newblock In \emph{Proceedings of the 2018 Conference of the North {A}merican
  Chapter of the Association for Computational Linguistics: Human Language
  Technologies, Volume 2 (Short Papers)}, pages 15--20. Association for
  Computational Linguistics.

\bibitem[{Zhao et~al.(2018{\natexlab{b}})Zhao, Zhou, Li, Wang, and
  Chang}]{zhao-glove}
Jieyu Zhao, Yichao Zhou, Zeyu Li, Wei Wang, and Kai-Wei Chang.
  2018{\natexlab{b}}.
\newblock \href {https://doi.org/10.18653/v1/D18-1521} {Learning gender-neutral
  word embeddings}.
\newblock In \emph{Proceedings of the 2018 Conference on Empirical Methods in
  Natural Language Processing}, pages 4847--4853. Association for Computational
  Linguistics.

\end{thebibliography}
\bibliographystyle{acl_natbib}

\cleardoublepage
\appendix

\section{Tukey Median Projection}
\label{app:tukey_proof}

\begin{theorem}\label{thm:tukeybound}
Let $q^-$ be a point with Tukey depth $t(q^-)$ with respect to $P^-$ and let $q^+$ be a point with Tukey depth $t(q^+)$ with respect to $P^+$. Then the projection of $P$ along vector $w = q^+ - q^-$ ensures that any linear classifier must make at least $\min(t(q^-), t(q^+))$ misclassifications after projection.  
\end{theorem}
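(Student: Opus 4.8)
The plan is to reduce the statement about arbitrary linear classifiers in the projected space to a statement about a restricted family of hyperplanes in the original space $\Reals^d$, and then invoke the definition of Tukey depth directly. Write $\hat{w} = w/\|w\|$ for the normalized projection direction. The first and most important observation is that $q^-$ and $q^+$ collapse to a single point under the projection: since $q^+ - q^- = w$ is parallel to $\hat{w}$, a direct computation with $p_w = p - (p\cdot\hat{w})\hat{w}$ gives $q^+_w - q^-_w = w - (w\cdot\hat{w})\hat{w} = 0$. I denote this common image by $\bar{q}$.

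Next I would set up the reduction. Any linear classifier on the projected point set is determined by a hyperplane, and since all projected points lie in $H_w$, only the classifier's trace $\ell$ on $H_w$ matters. I would lift $\ell$ to the hyperplane $G = \ell + \Reals\hat{w}$ in $\Reals^d$, i.e.\ the set of all points whose projection lies in $\ell$; by construction $G$ contains the direction $\hat{w}$, and $G \cap H_w = \ell$. Because a point $p$ and its projection $p_w$ differ by a multiple of $\hat{w}$, and $G$ contains the direction $\hat{w}$, the points $p$ and $p_w$ lie on the same side of $G$. Hence classifying the projected points by $\ell$ produces exactly the same labels, and therefore the same number of misclassifications, as classifying the original points by $G$. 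It thus suffices to prove the bound for an arbitrary hyperplane $G$ parallel to $\hat{w}$, applied to the original points $P$.

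Now comes the core argument. Write $G = \{x : \langle a, x\rangle = c\}$ with $\langle a, \hat{w}\rangle = 0$, which expresses that $G$ is parallel to $\hat{w}$. Since $\langle a, q^+\rangle = \langle a, q^-\rangle + \langle a, w\rangle = \langle a, q^-\rangle$, the points $q^-$ and $q^+$ take the same value under $\langle a, \cdot\rangle$ and hence lie on the same closed side of $G$ (consistently with the fact that they project to the common point $\bar{q}$). I would then split into two cases according to which label the classifier assigns to that side. If that side is labelled $+$, then every point of $P^-$ lying on it is misclassified; the closed halfspace bounded by the hyperplane through $q^-$ parallel to $G$ is contained in that side, and by the definition of Tukey depth this halfspace contains at least $t(q^-)$ points of $P^-$, giving at least $t(q^-)$ misclassifications. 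Symmetrically, if that side is labelled $-$, the closed halfspace through $q^+$ parallel to $G$ yields at least $t(q^+)$ misclassified points of $P^+$. In either case the number of misclassifications is at least $\min(t(q^-), t(q^+))$, which proves the theorem.

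Each step is short, so the main obstacle is conceptual rather than computational: getting the reduction exactly right. One must argue carefully that restricting attention to hyperplanes parallel to $\hat{w}$ loses no generality, and handle the correspondence between the classifier's trace $\ell$ in $H_w$ and its lift $G$, together with the book-keeping of closed versus open halfspaces and the harmless boundary case where $q^\pm$ lies on $G$. The fact that $q^-$ and $q^+$ project to the same point $\bar{q}$ is what pins both of them to the same side of every admissible $G$, and is the linchpin that makes the two-sided Tukey depth bound applicable.
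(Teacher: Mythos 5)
Your proof is correct, and it rests on the same linchpin as the paper's own argument: since $w = q^+ - q^-$, the two centers collapse to a single point under the projection, which lets the two Tukey depths be played off against each other. The execution, however, is organized differently in a way worth noting. The paper works entirely in the projected space: it shifts the classifier hyperplane $H$ to a parallel copy $H_\leftrightarrow$ through the common image $\tau$, reads off at least $t(q^-)$ misclassified points of $P^-$ on one side of $H_\leftrightarrow$ and at least $t(q^+)$ misclassified points of $P^+$ on the other, and then shifts back to $H$, observing that one of the two groups must remain misclassified. That version implicitly relies on Tukey depth not decreasing under projection (the halfspace counts for the \emph{projected} sets are asserted ``by construction of $\tau$''). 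You instead lift the classifier's trace $\ell \subset H_w$ to the hyperplane $G = \ell + \Reals\hat{w}$ in the original space, observe that $\langle a, q^-\rangle = \langle a, q^+\rangle$ pins both centers to the same closed side of $G$, and invoke the depth definition directly in $\Reals^d$ via the parallel closed halfspace through $q^-$ or $q^+$, with a single case split on the label of that side. Your lift makes the depth-transfer step explicit and rigorous, and your case split replaces the paper's shift-and-shift-back maneuver; the paper's route is a bit shorter because it never leaves the projected space. If you write yours up in full, two small pieces of bookkeeping remain, both easy: the degenerate classifier whose restriction to $H_w$ is constant (trivial, since $t(q^\pm) \leq |P^\pm|$), and the boundary case $q^\pm \in G$ that you already flagged — there, whichever label the closed side of $G$ carries, one of the two closed-halfspace depth bounds fires, so the conclusion is unaffected.
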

\begin{proof}
Let $\tau$ be the Tukey medians of $P^-$ and $P^+$ after projection (by construction, these Tukey medians have been projected onto the same point). Now let $H$ be the hyperplane corresponding to some linear classifier after projection. Let $H_\leftrightarrow$ be the hyperplane obtained by shifting $H$ to contain the point $\tau$. Let $H^-_\leftrightarrow$ be the halfspace bounded by $H_\leftrightarrow$ that contains the majority of $P^-$ after projection, and let $H^+_\leftrightarrow$ be the other halfspace bounded by $H_\leftrightarrow$. By construction of $\tau$, there must be at least $t(q^-)$ points of $P^-$ in $H^+_\leftrightarrow$ and at least $t(q^+)$ points of $P^+$ in $H^-_\leftrightarrow$, which are misclassified by the linear classifier defined by $H_\leftrightarrow$. By shifting the hyperplane back from $H_\leftrightarrow$ to $H$, we can see that either the misclassified points of $P^-$ or the misclassified points of $P^+$ remain misclassified by the linear classifier defined by $H$. Thus, any linear classifier must misclassify at least $\min(t(q^-), t(q^+))$ points after projection by TMP.
\end{proof}

We can show that TMP is asymptotically worst-case optimal in minimizing the number of misclassifications after projection. For that we need the following technical lemma.

\begin{lemma}\label{lem:simplex}
Let $\Delta_d$ be the regular $d$-dimensional simplex embedded in $\mathbb{R}^d$ such that the center of $\Delta_d$ lies at the origin and the $d+1$ vertices of $\Delta_d$, represented by the vectors $v_1, \ldots, v_{d+1}$, are at unit distance from the origin. For any point $p \in \mathbb{R}^d$, there exist a unit vector $r$ such that $(v_i \cdot r) \leq (p \cdot r) - \frac{1}{d}$ for at least $d$ vertices of $\Delta_d$. 
\end{lemma}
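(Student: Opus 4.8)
The plan is to exploit the rigid inner-product structure of a regular simplex and then choose $r$ to be one of the vertex directions. First I would record the three defining relations. Since all vertices lie at unit distance from the origin, $\|v_i\| = 1$ for every $i$. Since the origin is the centroid, $\sum_{i=1}^{d+1} v_i = 0$. Taking the squared norm of this last identity gives $0 = \sum_i \|v_i\|^2 + \sum_{i \neq j} v_i \cdot v_j = (d+1) + d(d+1)\,c$, where $c$ denotes the common value $v_i \cdot v_j$ for $i \neq j$ (common by symmetry of the regular simplex). This forces $c = -\tfrac{1}{d}$. The uniform pairwise inner product $v_i \cdot v_j = -\tfrac{1}{d}$ is the crucial fact, and it is precisely what makes the constant $\tfrac{1}{d}$ in the statement appear naturally.

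Next I would locate a suitable vertex. Because $\sum_{k} (p \cdot v_k) = p \cdot \sum_k v_k = 0$, the numbers $p \cdot v_k$ cannot all be strictly negative, so there exists an index $k$ with $p \cdot v_k \geq 0$. I then set $r = v_k$, which is a unit vector by the first relation above.

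Finally I would verify the target inequality for the $d$ vertices $v_i$ with $i \neq k$. For each such vertex, $v_i \cdot r = v_i \cdot v_k = -\tfrac{1}{d}$, whereas $(p \cdot r) - \tfrac{1}{d} = (p \cdot v_k) - \tfrac{1}{d} \geq -\tfrac{1}{d}$ by the choice of $k$. Hence $v_i \cdot r \leq (p \cdot r) - \tfrac{1}{d}$ holds for all $d$ vertices distinct from $v_k$, which is exactly the claim.

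I do not expect a real obstacle once the direction is guessed correctly: the whole argument hinges on the observation that the $d$ non-chosen vertices all project to the same value $-\tfrac{1}{d}$ onto $r = v_k$, so the required gap of $\tfrac{1}{d}$ is met automatically. The only step demanding a moment's care is the averaging argument guaranteeing a vertex with nonnegative projection onto $p$, and this is immediate from the centroid condition $\sum_k v_k = 0$. No delicate estimation should be needed, and choosing $r$ among the finitely many vertex directions (rather than optimizing over all unit vectors) is what keeps the proof elementary.
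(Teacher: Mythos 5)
Your proof is correct and follows the same overall strategy as the paper's: choose $r$ to be one of the vertex directions $v_k$, exploit the identity $v_i \cdot v_k = -\frac{1}{d}$ for $i \neq k$ (which makes all $d$ non-chosen vertices project to the common value $-\frac{1}{d}$), and thereby reduce the claim to exhibiting a vertex with $p \cdot v_k \geq 0$. Where you differ is in how that vertex is found. The paper writes $p$ in barycentric form $p = \sum_i \alpha_i v_i$ with $\sum_i \alpha_i = 1$, takes $j$ with $\alpha_j$ maximal, and verifies $(p \cdot v_j) \geq 0$ by an inequality chain; you instead observe that $\sum_k (p \cdot v_k) = p \cdot \sum_k v_k = 0$ by the centroid condition, so some $p \cdot v_k$ is nonnegative by averaging. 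The two selections are essentially the same vertex in disguise: since $p \cdot v_k = \frac{d+1}{d}\alpha_k - \frac{1}{d}$, having $p \cdot v_k \geq 0$ is equivalent to $\alpha_k \geq \frac{1}{d+1}$, which the maximal coefficient automatically satisfies. Your averaging argument is, however, shorter and avoids barycentric coordinates altogether, which is a genuine simplification of the only nontrivial step. You also gain some self-containedness by deriving $v_i \cdot v_j = -\frac{1}{d}$ from the unit-norm and centroid conditions via $0 = \bigl\|\sum_i v_i\bigr\|^2 = (d+1) + d(d+1)c$, whereas the paper cites this inner-product value as a known property of the regular simplex.
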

\begin{proof}
As the vectors $v_1, \ldots, v_{d+1}$ span the entirety of $\mathbb{R}^d$, we can write $p$ as an affine combination $p = \sum_{i=1}^{d+1} \alpha_i v_i$, where $\sum_{i=1}^{d+1} \alpha_i = 1$. Without loss of generality, let $\alpha_j$ be the largest coefficient among all $\alpha_i$ for $1 \leq i \leq d+1$. We pick $r = v_j$. Since $\Delta_d$ is a regular simplex, it is known that $(v_i \cdot v_j) = \frac{-1}{d}$ for $v_i \neq v_j$. Using the linearity of the dot product, we now get the following:
\begin{align*}
(p \cdot r) &= \sum_{i=1}^{d+1} \alpha_i (v_i \cdot r) \\
&= \alpha_j - \frac{1}{d} \sum_{1 \leq i \leq d+1, i \neq j} \alpha_i \\
&\geq \alpha_j - \frac{1}{d} \sum_{1 \leq i \leq d+1, i \neq j} \alpha_j \\
&= \alpha_j (1 - \frac{d}{d}) \\
&= 0.
\end{align*}
On the other hand we have that $(v_i \cdot r) = \frac{-1}{d}$ for all $v_i \neq v_j$, and hence the stated inequality holds for all $v_i \neq v_j$. 
\end{proof}

We can now prove our main result. 

\begin{theorem}\label{thm:projecthard}
For any $d > 0$ and $n \geq m > 0$, there exist point sets $P$ of $m$ points and $Q$ of $n$ points in $\mathbb{R}^{d+1}$, such that for any projection of $P$ and $Q$ along a unit vector $w$, there exists a hyperplane $H$ in $\mathbb{R}^{d+1}$ with at least $\left\lfloor\frac{m d}{d+1}\right\rfloor$ points of $P$ on one side of $H$ and all points of $Q$ on the other side of $H$ after projection.
\end{theorem}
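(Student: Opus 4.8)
The plan is to exhibit an explicit configuration realizing the bound. I would place all $n$ points of $Q$ at the origin of $\mathbb{R}^{d+1}$, and place the $m$ points of $P$ on the $d+1$ vertices of the regular simplex $\Delta_d$ of Lemma~\ref{lem:simplex}, lifted one unit in the extra coordinate: write $\hat v_i = (v_i, 1) \in \mathbb{R}^{d+1}$ for the lifted vertices, and distribute the $m$ points of $P$ as evenly as possible among the $\hat v_i$, so that each vertex receives at most $\lceil m/(d+1)\rceil$ points. The key counting observation is that $\lfloor md/(d+1)\rfloor = m - \lceil m/(d+1)\rceil$, so the points sitting on \emph{any} $d$ of the $d+1$ vertices number at least $\lfloor md/(d+1)\rfloor$. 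Hence it suffices to prove: for every unit vector $w$, after projection there is a hyperplane in $\mathbb{R}^{d+1}$ with the (projected) origin on one side and at least $d$ of the projected lifted vertices on the other side.

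Next I would reduce this separation problem to Lemma~\ref{lem:simplex}. Since all projected points lie in the hyperplane $H_w = w^\perp$, only the $H_w$-component of a separating hyperplane's normal matters, so I may assume the normal $u$ satisfies $u \perp w$. Because $Q$ sits at the origin (which the projection fixes) and $\hat v_i \cdot u = v_i \cdot \bar u + u_{d+1}$, where $\bar u$ and $\bar w$ denote the first $d$ coordinates, I want to choose $u \perp w$ making $\hat v_i \cdot u < 0$ for $d$ vertices while the origin stays at value $0$. In the main case $w_{d+1} \neq 0$, I parametrize the constraint $u \perp w$ by taking $\bar u$ free and $u_{d+1} = -(\bar u \cdot \bar w)/w_{d+1}$, which gives the clean identity $\hat v_i \cdot u = \bar u \cdot (v_i - \bar p)$ with $\bar p := \bar w / w_{d+1} \in \mathbb{R}^d$. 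Applying Lemma~\ref{lem:simplex} with this point $\bar p$ and setting $\bar u = r$ yields $\hat v_i \cdot u = r \cdot (v_i - \bar p) \leq -\tfrac{1}{d}$ for at least $d$ vertices; since the origin has value $0$, any threshold in $(-\tfrac1d, 0)$ gives the desired separating hyperplane (and one verifies directly that this $u$ indeed satisfies $u \cdot w = 0$).

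The remaining case is $w_{d+1} = 0$, where the projection preserves the height coordinate; then $u = e_{d+1}$ is orthogonal to $w$, and the hyperplane $\{x_{d+1} = \tfrac12\}$ separates the origin from all $d+1$ lifted vertices, which is even stronger than needed. I expect the main obstacle to be precisely the bookkeeping of the orthogonality constraint $u \perp w$ in the first case: the non-obvious point is that eliminating $u_{d+1}$ turns the constraint into an evaluation of the simplex against the single point $\bar p = \bar w/w_{d+1}$, and this is exactly why Lemma~\ref{lem:simplex} is stated for an \emph{arbitrary} point $p$ rather than only for the origin. Once this correspondence is seen, the argument is routine; the up-to-$\lceil m/(d+1)\rceil$ leftover points of $P$ (those on the one discarded vertex) may be misclassified but do not affect the stated count.
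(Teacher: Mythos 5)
Your proof is correct, and it reaches the bound by a genuinely different route than the paper, even though both arguments pivot on the same Lemma~\ref{lem:simplex} and the same count $m - \left\lceil\frac{m}{d+1}\right\rceil = \left\lfloor\frac{md}{d+1}\right\rfloor$. The paper keeps the simplex in the slice $x_{d+1}=0$ and places $Q$ in an $\varepsilon$-ball around $(0,\dots,0,C)$; it then splits on the size of the horizontal component $\beta$ of $w$ (an easy-separation case when $\beta^2 \geq \frac{1+\varepsilon}{C}$), and in the main case uses the horizontal normal $R=(r,0)$ --- which is \emph{not} orthogonal to $w$ --- so it must push error terms of size $\beta^2$ through the computation and tune the constants $C=4d$ and $\varepsilon = \frac{1}{2d}$ so that $\beta^2 - \frac{1}{d} < -\varepsilon$. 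You instead lift the simplex to height $1$, pin $Q$ at the origin (a fixed point of every projection), and restrict the separating normal to $w^\perp$, which makes dot products invariant under the projection ($p_w \cdot u = p \cdot u$); this collapses the whole problem to a single exact application of Lemma~\ref{lem:simplex} at $\bar p = \bar w / w_{d+1}$ via the identity $\hat v_i \cdot u = \bar u \cdot (v_i - \bar p)$, leaving only the trivial degenerate case $w_{d+1}=0$. What each buys: the paper's version works with honestly distinct points and never divides by a coordinate of $w$; yours eliminates the case analysis, the tuned constants, and all inequality slack, and it isolates cleanly why Lemma~\ref{lem:simplex} must hold for an \emph{arbitrary} anchor $p$ (the paper exploits the same generality, with $p$ the horizontal part of the projected cluster center). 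One minor repair you should make explicit: as written, $P$ and $Q$ are multisets (all of $Q$ coincides at the origin, and several points of $P$ share a vertex), while the theorem asks for point \emph{sets} of sizes $m$ and $n$. This is harmless but worth a sentence: perturb every point by less than, say, $\frac{1}{4d}$. After normalizing $u = \left(r, -(r \cdot \bar p)\right)$, your separation margin is $\frac{1/d + t}{\sqrt{1+t^2}}$ with $t = r \cdot \bar p \geq 0$, which is at least $\frac{1}{d}$ uniformly over all admissible $w$, and the projection is $1$-Lipschitz, so strict separation survives a perturbation chosen once, independently of $w$.
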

\begin{proof}
Let $e_1, \ldots, e_{d+1}$ be the basis vectors spanning $\mathbb{R}^{d+1}$. We first prove the statement for the case that $m = d+1$. Specifically, let $P$ be formed by the vertices of the regular $d$-dimensional simplex (as in Lemma~\ref{lem:simplex}) centered at the origin and restricted to the $d$-dimensional subspace spanned by the first $d$ basis vectors $e_1, \ldots, e_d$. Furthermore, let $Q$ be an arbitrary set of $n$ points such that all points are within Euclidean distance less than $\varepsilon$ of the center point $q = (0, \ldots, C)$ (for some $C > 0$ and $\varepsilon > 0$) in $\mathbb{R}^{d+1}$.

Let $w$ be any unit projection vector. We write $w$ as $w = \alpha e_{d+1} + \beta z$, where $z$ is a unit vector in the subspace spanned by $e_1, \ldots, e_d$ (such that we have $\alpha^2 + \beta^2 = 1$). Let $q_w$ be the center of $Q$ after projection along $w$. It is easy to see that $P$ and $Q$ are linearly separable after projection if $\|q_w\| \geq 1+\varepsilon$ (the center of $P$ remains at the origin after projection). We have that:
\begin{align*}
q_w &= q - (w \cdot q) w \\
&= C e_{d+1} - \alpha C (\alpha e_{d+1} + \beta z) \\
&= e_{d+1} C (1 - \alpha^2) - \alpha \beta C z.
\end{align*}
Since $z$ is independent from $e_{d+1}$ we get that $\|q_w\| \geq C (1 - \alpha^2) = C \beta^2$, and hence $\|q_w\| \geq 1+\varepsilon$ if $\beta^2 \geq \frac{1+\varepsilon}{C}$. We may therefore assume that $\beta^2 < \frac{1+\varepsilon}{C}$. 

Now let $x$ be the $d$-dimensional point obtained by omitting the $(d+1)^{st}$ coordinate of $q_w$. We apply Lemma~\ref{lem:simplex} to $P$ with $p = x$ to obtain a corresponding unit vector $r$. We extend $r$ with an additional coordinate with the value $0$ to obtain the $(d+1)$-dimensional vector $R = (r, 0)$. We then define the hyperplane $H$ as the hyperplane that satisfies $(p \cdot R) = (q_w \cdot R) - \varepsilon$ for points $p \in \mathbb{R}^{d+1}$.

We now show that $H$ has the desired properties. Let $u_i$ be one of the $d$ points of $P$ such that $(u_i \cdot r) \leq (x \cdot r) - \frac{1}{d}$, and let $u'_i$ be the point obtained by projecting $u_i$ along $w$. We get that:
\begin{align*}
u'_i &= u_i - (w \cdot u_i) w \\
&= u_i - \beta (z \cdot u_i) w \\
&= u_i - \beta'_i w,
\end{align*}
where $\beta'_i \in [-\beta, \beta]$. By construction of $x$, we also get that $(x \cdot r) = (q_w \cdot R)$. We then get the following:
\begin{align*}
(u'_i \cdot R) &= (u_i \cdot R) - \beta'_i (w \cdot R) \\
&= (u_i \cdot r) - \beta'_i \beta (z \cdot R) \\
&\leq (u_i \cdot r) + \beta^2 \\
&\leq (x \cdot r) + \beta^2 - \frac{1}{d} \\
&= (q_w \cdot R) + \beta^2 - \frac{1}{d}.
\end{align*}
Thus, if $\beta^2 - \frac{1}{d} < -\varepsilon$, then $(u'_i \cdot R) < (q_w \cdot R) - \varepsilon$. Since $\beta^2 < \frac{1+\varepsilon}{C}$, we can choose $C = 4d$ and $\varepsilon = \frac{1}{2d}$ to ensure this property, as then $\frac{1+\varepsilon}{C} - \frac{1}{d} < \frac{1}{2d} - \frac{1}{d} = -\varepsilon$. 

Now let $v_j$ be one of the points of $Q$, and let $v'_j$ be the point obtained by projecting $v_j$ along $w$. Since the projection along $w$ can only shrink distances, we get that $\|v'_j - q_w\| \leq \|v_j - q\| < \varepsilon$. This implies that $((v'_j - q_w) \cdot R) > -\varepsilon$, which can be rewritten as $(v'_j \cdot R) > (q_w \cdot R) - \varepsilon$. As a result, $d$ points of $P$ are on one side of $H$ and $n$ points of $Q$ are on the other side of $H$, as required.

If $P$ contains $m \neq d+1$ points, then we can simply follow the construction above and distribute the points arbitrarily close to the vertices of the regular simplex, such that there are at at most $\left\lceil\frac{m}{d+1}\right\rceil$ points around each vertex. As a result, $n$ points of $Q$ will be on one side of $H$ and at least $m - \left\lceil\frac{m}{d+1}\right\rceil = \left\lfloor\frac{m d}{d+1}\right\rfloor$ points of $P$ will be on the other side of $H$, which completes the proof.
\end{proof}

\begin{cor}\label{cor:misclassify-upper}
For any $d > 1$ and $n \geq m > 0$, there exists a set of points $P$ in $\mathbb{R}^{d}$ with $|P^-| = m$ and $|P^+| = n$ such that, for any projection of $P$ along a single unit vector $w$, the number of misclassifications of the best possible linear classifier after projection is at most $\left\lceil\frac{m}{d}\right\rceil$.  
\end{cor}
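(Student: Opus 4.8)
The plan is to derive this corollary directly from Theorem~\ref{thm:projecthard}, with only a shift in the dimension parameter and a short arithmetic simplification. Theorem~\ref{thm:projecthard} already constructs, in $\mathbb{R}^{d+1}$, point sets $P$ (of size $m$) and $Q$ (of size $n$) such that every projection admits a hyperplane leaving at least $\lfloor \frac{md}{d+1}\rfloor$ points of $P$ on the opposite side from all of $Q$. The corollary lives in $\mathbb{R}^d$ rather than $\mathbb{R}^{d+1}$, so I would first invoke the theorem with its dimension parameter set to $d-1$ (legitimate because the hypothesis $d > 1$ guarantees $d-1 > 0$), yielding sets in $\mathbb{R}^{(d-1)+1} = \mathbb{R}^d$. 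I would then take $P^- := P$ of size $m$ and $P^+ := Q$ of size $n$, so that $|P^-| = m$ and $|P^+| = n$ as required.

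Next I would fix an arbitrary projection along a unit vector $w$ and apply the theorem to obtain a hyperplane $H$ with at least $\lfloor \frac{m(d-1)}{d}\rfloor$ points of $P^-$ on one side and all of $P^+$ on the other side after projection. The key observation is that $H$ itself is a valid linear classifier: orient it so that the halfspace containing all of $P^+$ receives the $+1$ label and the other halfspace the $-1$ label. Then no point of $P^+$ is misclassified, and every point of $P^-$ on the correct side is classified correctly; since at least $\lfloor \frac{m(d-1)}{d}\rfloor$ of them are, at most $m - \lfloor \frac{m(d-1)}{d}\rfloor$ points of $P^-$ are misclassified. As the best possible linear classifier can only do at least as well as $H$, its misclassification count is bounded by the same quantity.

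It then remains to simplify $m - \lfloor \frac{m(d-1)}{d}\rfloor$. Writing $\frac{m(d-1)}{d} = m - \frac{m}{d}$ and using that $m$ is an integer, so that $\lfloor m - \frac{m}{d}\rfloor = m - \lceil \frac{m}{d}\rceil$, I would conclude $m - \lfloor \frac{m(d-1)}{d}\rfloor = \lceil \frac{m}{d}\rceil$, which is exactly the claimed bound. There is essentially no hard step here, since the entire content of the corollary is carried by Theorem~\ref{thm:projecthard}; the only places demanding care are the dimension bookkeeping (the substitution that turns the theorem's ambient $\mathbb{R}^{d+1}$ into the corollary's $\mathbb{R}^d$) and the floor-to-ceiling identity, both of which are routine once set up correctly.
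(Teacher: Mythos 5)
Your proposal is correct and takes essentially the same route as the paper's proof: invoke Theorem~\ref{thm:projecthard} with its dimension parameter set to $d-1$, identify $P^-$ and $P^+$ with the theorem's $P$ and $Q$, and observe that the separating hyperplane $H$ defines a classifier misclassifying at most $m - \left\lfloor\frac{m(d-1)}{d}\right\rfloor = \left\lceil\frac{m}{d}\right\rceil$ points, so the optimal classifier does at least as well. Your explicit check of the floor-to-ceiling identity and of the hypothesis $d > 1$ merely spells out arithmetic and bookkeeping that the paper leaves implicit.
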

\begin{proof}
Choose $P$ and the binary attribute $a^*$ such that $P^-$ and $P^+$ match $P$ and $Q$ in the statement of Theorem~\ref{thm:projecthard} (note that the theorem has to be applied with dimension $d-1$). Then, for any projection of $P$ along a single vector, there exists a hyperplane $H$ such that $\left\lfloor\frac{m (d-1)}{d}\right\rfloor$ points of $P^-$ are on one side of $H$ and all points of $P^+$ are on the other side of $H$. Thus, the linear classifier defined by $H$ misclassifies at most $\left\lceil\frac{m}{d}\right\rceil$ points, and hence the best possible linear classifier must perform at least as well.   
\end{proof}

We now return to the number of misclassifications caused by TMP. It is known that, for every point set $P$ in $\Reals^d$ with $n$ points, the Tukey median with respect to $P$ has Tukey depth at least $\left\lceil\frac{n}{d+1}\right\rceil$. As a result, by using the projection of TMP, any linear classifier must make at least $\left\lceil\frac{m}{d+1}\right\rceil$ misclassifications after projection by Theorem~\ref{thm:tukeybound}, where $m$ is the size of the smallest set among $P^-$ and $P^+$. In the worst-case, this nearly matches the upper bound of $\left\lceil\frac{m}{d}\right\rceil$ according to Corollary~\ref{cor:misclassify-upper}.

\section{Experiments}\label{app:detailed_experiments}

For convenience, we provide a detailed self-contained description of the experiments here.  The first paragraph is an exact repetition (included here for convenience). All others are extended versions of the one included in the main text. This version can be read either instead of or next to the shorter version in the main text.

\paragraph{General Settings.} 

We use the GloVe word embeddings~\cite{zhao-glove}\footnote{\texttt{glove.42B.300d}} and limit the dataset to the 150,000 most common words. We create the same classes of male, female and neutral embeddings as \newcite{nullitout} following their approach. A male vector $\vec{M}$ is defined as $\vec{he} - \vec{she}$ and a female vector $\vec{F}$ as $\vec{she} - \vec{he}$. Our male dataset consists of the 7500 data points closest to $\vec{M}$ and our female dataset to the 7500 data points closest to $\vec{F}$. For the neutral dataset we use Ravfogel et al.'s \citeyear{nullitout} seed to obtain the same random selection of 7500 words with a cosine similarity of less than 0.3 to $\vec{M}$. 

We include experiments on two classes (\emph{feminine} and \emph{masculine}) and on three classes (\emph{feminine}, \emph{masculine} and \emph{neutral}). The INLP method uses the datasets to train SVM classifiers as described above. For the MP method, let $\vec{v_F}$, $\vec{v_M}$ and $\vec{v_N}$ be the mean of respectively feminine, masculine and neutral labeled points of a given set $D$. Then in experiments on two classes the mean projection method uses the $\vec{v_F} - \vec{v_M}$ vector for projection. For experiments with all three classes, we use both $\vec{v_N} - \vec{v_F}$ and $\vec{v_N} - \vec{v_M}$ vectors for projection.

\paragraph{Linear Guarding.}

The first set of experiments illustrates the process of gender information being removed from embeddings using each method. We apply the INLP and MP algorithms to the data labeled feminine, masculine and neutral described above and investigate to what extent a linear classifier can still identify the original gender of the word. The results of this classifier should decrease as gender information is removed from the set. For INLP, we use a L2-regularized SVM with the same parameters and random seeds as \citet{nullitout}. We can observe changes over the course of performing multiple projections for INLP. For MP, only a single projection is needed. 

We test the classifier on the same train, test and development set as \citet{nullitout}. The dataset contains three classes (feminine, maschuline and neutral). Like \citet{nullitout}, we report the results for all three classes as well as two classes (male and female). Since all classes are equally distributed, we report on the classifiers accuracy. A set of completely debiased embeddings should yield a result that is equal to chance.

Figure~\ref{fig:acc_compare_three_classes_app} presents the results of 35 iterations of INLP and the single iteration of MP on three classes of the data. Before applying projections, a linear classifier achieves perfect classification. The mean projection method brings down the accuracy to 34.18\%. It takes INLP 12 iterations to reach a comparable result of 34.9\% or at least 6 iterations to reach 39.9\% accuracy. We repeat the experiment on two classes (masculine and feminine) and provide the results in Figure~\ref{fig:acc_compare_two_classes_app}. Here after applying the MP method, an accuracy of 50.6\% is reached while INLP needs 14 iterations to get down to 50.51\% accuracy. These results show that with MP, one projection is enough whereas multiple projections are needed to achieve a similar result with INLP. The results of INLP stabilizes at approximately the same result as MP's single projection.

\begin{figure}
    \centering
    \includegraphics{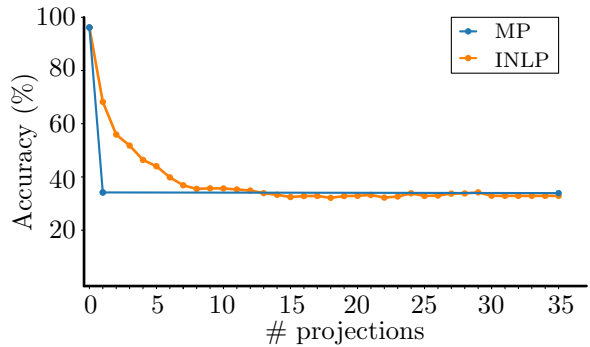}
    \caption{Comparing accuracy of INLP and MP on \textbf{three} classes of feminine, masculine and neutral words.}
    \label{fig:acc_compare_three_classes_app}
\end{figure}

\begin{figure}
    \centering
    \includegraphics{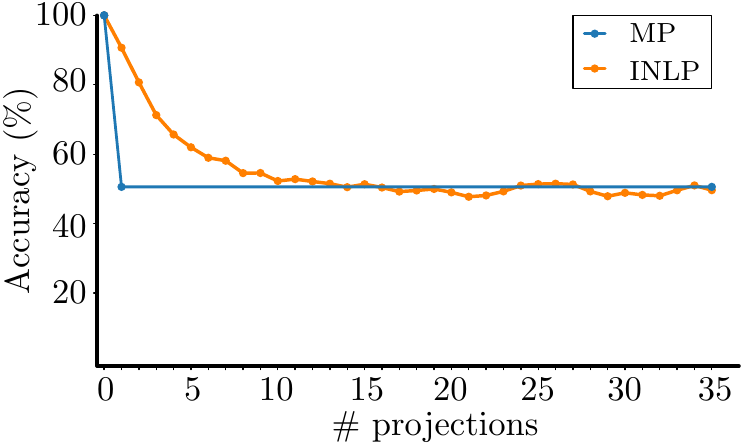}
    \caption{Comparing accuracy of INLP with MP on \textbf{two} classes of feminine and masculine words.}
    \label{fig:acc_compare_two_classes_app}
\end{figure}

\paragraph{Classification.} Both methods only aim to remove linearly encoded information. We compare to what extent they leave non-linear encoding of gender in tact by running a 1-hidden-layered MLP with ReLU activation. This MLP manages to identify gender with 85.0\% after 35 iterations. After a single MP projection, accuracy of the MLP drops to 81.6\%. Overall these results confirm that MP needs a single projection to achieve equal linear guarding as multiple INLP projections and a comparable (slightly better) result on the remaining non-linear encoding.

\paragraph{Clustering.} 
Another way of assessing the debiasing methods is to test to what extent the embeddings can still form clusters representative of the male and female class. If the bias has been removed successfully, the clusters should no longer reflect gender bias information. We use the V-measure \cite{rosenberg2007v} to assess to what extent the two classes are intertwined. This measure is the harmonic mean of two components: \emph{homogeneity} (reflecting how many points in a cluster belong to the same class) and \emph{completeness} (reflecting how many points of a class ended up in the same cluster). The homogeneity component assesses to what extent samples of a single cluster have the same class i.e.\ are similar. Completeness on the other hand measures to what extent data points with the same labels are in the same cluster. A perfect V measure (score 1.0) indicates perfect clustering with respect to the gold labels, where equal distribution of class members from all classes over all clusters yields a score of 0. We apply $K$-means clustering with $K=2$ to the male and female gendered words in the test split, then we calculate the V-measure.

We use the 2000 most biased female and male words as \citet{nullitout} suggests. Using the MP method reduces the V-measure to 0.53\% after a single projection. The V-measure reaches 0.67\% after applying the INLP method with all 35 iterations on the this set.


\paragraph{Exploring Neighbors.} We inspect changes in the direct environment of 40 words randomly selected by \newcite{nullitout} as well as their selection of 22 gendered given names. If a debiasing method is targeted, it should not affect the neighbors of the randomly selected words and only lead to changes in the neighborhood of the words that should be affected by debiasing (in this case the 22 gendered given names, where new name of the opposite gender may appear). 

When applying MP, only 8 out of 120 neighbors of random words changed, compared to 43 after changes caused by INLP. For given names, MP yielded 24 changes out of 66 compared to 56 for INLP. Overall, it seems that INLP leads to more changes in the neighborhoods of words targeted by debiasing as well as other words. 

Upon manual inspection, we observed that MP mainly leads to names being replaced by other given names in the neighborhood of gendered names (14 times,  5 times a name of opposite gender). INLP's new contexts contain more common nouns and rare non-given names (46 tokens that end up in the immediate context of a gendered name are not regularly spelled given names). Another noticeable difference is that MP appears to have moved the name \textit{Ariel} to its interpretation of a location, whereas it ended up near other terms related to Disney after applying INLP. The full results can be found in Tables~\ref{sec:embspacechange} and~\ref{tab:emb_space_gendered_name} in Appendix~\ref{sec:embspacechange}.

\paragraph{Similarity Experiments}

Word debiasing methods should only affect information that reflects the bias and leave the rest of the space unchanged. We investigate the effects of both debiasing methods on three word similarity datasets, namely, simlex-999, Sim353 (both relatedness and similarity dataset) and Mturk-771. The datasets can serve as a reflection of how the debiasing methods affect the semantic space. While it is possible that the similarity datasets contain gendered words, it can be assumed that the majority of word pairs does not have an explicit gendered meaning. We verify this assumption by means of manual analysis of the simlex-999 pairs and found 62 pairs where one or both terms had a meaning that explicitly contains gender. Our manual annotations can be found in our Github repository.
We report the Spearman correlation between the similarity scores given by the annotators and the cosine distance before and after debiasing with INLP and MP in Table~\ref{tab:similarity_correlation}.

\begin{table}[h]
    \renewcommand{\arraystretch}{1.3}
    \footnotesize
    \centering
    \begin{tabu} {X[2.5,l]  X[1.2 l] X[1 l] X[1 l] } \hline
    Dataset & Original & MP  &  INLP\\ \hline 
    simlex-999 & 0.373 & 0.385  &  0.489\\
    WordSim353 - Sim & 0.695 & 0.690 & 0.799 \\
    WordSim353 - Rel & 0.599 & 0.589 & 0.698 \\
    Mturk-771 & 0.684 & 0.691 & 0.728  \\
    \end{tabu}
    \caption{ The correlation of the cosine distances with the similarity scores in the Original space, after MP debiasing and after INLP debiasing. }
    \label{tab:similarity_correlation}
\end{table}

 The results on these similarity tests remain relatively stable after applying MP, whereas we see a clear increase in correlation sores after INLP is applied. This result confirms that the overall embeddings change less when applying MP compared to INLP. The observed improvement caused by INLP indicate that more research is needed to determine how INLP affects the entire semantic space. If iteractive projections can improve the overall quality, the additional effect they have on the space may actually be desirable. In particular, \citet{nullitout} hypothesize that this increase may be due to words in the datasets containing a gender component which is not perceived by humans, but do not investigate the matter further. We further explore this possibility in our additional experiments described in Section~\ref{ssec:divingdeeper} (see main content above).

\paragraph{WEAT.}
\citet{weat} propose the Word Embedding Association Test (WEAT) that can measure bias inferred by the semantic similarity between groups of words. We have two groups of target words for instance $T_1=\{$ programmer, professor, engineer, \dots$\}$, $T_2=\{$nurse, teacher, librarian , \dots $\}$ and two set of attribute words $A_1=\{$ man, male, masculine, \dots $\}$, $A_2=\{$ woman, female, feminine, \dots $\}$. If the word embeddings representing these target words are not biased, then the relative similarities between the target words ($T_1$, $T_2$) and attributes ($A_1$, $A_2$) should be equivalent. In other words: words such as \textit{secretary} and \textit{programmer} should not be more similar to either of the two groups of gendered attributes. 

WEAT measures the association between the target groups and attributes: a biased representation is expected to maintain this association (reflected by a high score), whereas WEAT scores on representations with low to no bias should be close to zero. A WEAT score of 0 indicates that words from the target groups are, on average, equally strongly associated with $A_1$ and $A_2$. Like \newcite{nullitout}, we follow \newcite{lipstick} and represent the attributes through gendered names rather than attributes. We then take the target words of the same
 the three tests used by \citet{nullitout} (\emph{WEAT 6}, \emph{WEAT 7} and \emph{WEAT 8}) in Table~\ref{tab:weat_scores}.\footnote{\citet{nullitout} report p-values of associations rather than the values. We present the values instead, because the p-values do not distinguish between positive and negative WEAT scores.} \emph{WEAT~(6)} includes targets words referring to career and family attributes, \emph{WEAT~(7)} has target words related to math and art and \emph{WEAT~(8)} has science and art target words.

 Both MP and INLP reduce the WEAT scores. For \emph{WEAT 6} and \emph{7}, 35 INLP projections lead to the largest reduction. For \emph{WEAT 8}, MP provides the best results followed by a single INLP iteration (rather than the full 35). The negative value of \emph{WEAT 8} indicates a stronger bias in opposite direction that the remaining bias present after the first INLP iteration or after the MP projection. This leads us to wonder how to interpret these results. We therefore investigate them further in Section~\ref{ssec:divingdeeper}.

\begin{table}[h]
    \centering
    \begin{tabular}{c c c}
    \hline \hline
    \textbf{Original} & &\\ \hline \hline 
        \emph{WEAT 6} & \emph{WEAT 7} & \emph{WEAT 8} \\ \hline
         0.738 & 0.584 & 0.467\\
         \hline \hline
    \textbf{INLP-35} & &\\ \hline \hline
        \emph{WEAT 6} & \emph{WEAT 7} & \emph{WEAT 8} \\ \hline
        0.008 (-98\%) & 0.084 (-86\%) & |-0.310| (-34\%) \\
    \hline \hline
     \textbf{INLP-1} & &\\ \hline \hline
        \emph{WEAT 6} & \emph{WEAT 7} & \emph{WEAT 8} \\ \hline
             0.278 (-62\%) & 0.347 (-41\%) & 0.203 (-56\%) \\
    \hline \hline
    \textbf{MP} & &\\ \hline \hline
    \emph{WEAT 6} & \emph{WEAT 7} & \emph{WEAT 8} \\ \hline
          0.173 (-76\%) & 0.191 (-66\%) & 0.110 (-76\%) \\
    \end{tabular}
    \caption{WEAT scores of tests \emph{WEAT 6}, \emph{WEAT 7}, and \emph{WEAT 8} of the original embeddings, after INLP debiasing with 35 iterations and after MP debiasing. For each score, you can see the decrease from the original WEAT scores.}
    \label{tab:weat_scores}
\end{table}

\paragraph{Bias-by-neighbor.}

\newcite{lipstick} show that even if a debiasing method manages to remove a biased vector from explicitly gender words, it may still be close to other words with the same stereotypical connotation (e.g.\ \emph{nurse} remains close to \emph{receptionist} and \emph{teacher}). The \textbf{bias-by-neighbors} measure captures such remaining bias by providing the percentage of the 100 closest words that were male-biased in the original dataset. For completely debiased word embeddings, this percentage should be around 50\% (with half of the words being (slightly) biased to one gender and the other half of the words to the other).
This measure is applied to a set of biased professions provided by \citet{bolukbasi}, which originally has a bias score of 85.2\%. INLP reduces this score to 73.4\% and MP to 74.5\%. Both methods thus yield comparable results that reveal that bias is reduced but not completely removed.

\paragraph{TPR-GAP: ``gender in the wild''.} \citet{biasbios} propose an approach that considers the impact of bias on downstream classification tasks. The approach measures to what extent debiasing methods can remove gender bias from a system that predicts a person's occupation based on a biography describing them. A fair system should have equal performance for members of different classes \cite{hardt2016equality}. Specifically, it should not amplify a bias that is present in the label distribution. In the case of gender, the system should perform equally well predicting stereotypically gendered occupations such as surgeon, caretaker, secretary or marine, regardless of whether the biographical texts are about men or women. \citet{biasbios} use the $GAP^{TPR}$ score to measure to what degree the performance of a classification system is impacted by bias. The score is calculated as follows: TPR$_{g, y}$ (Equation~\ref{eq:tpr}) is the true positive rates for a given profession ($y$) and gender ($g$). TPR-GAP  Gap$_{g,y}^{\text{TPR}}$ is the difference (gap) between true positive rates (TPR) of gender $g$ and its opposite gender $\sim g$ in a given occupation $y$ (Equation~\ref{eq:gap}).  In order to calculate a single measure for bias in all occupations \citet{romanov} take the root-mean square of the TPR-GAPs for all occupations (Equation~\ref{eq:gap_rms}).


\begin{equation}
    \text{TPR}_{g, y} = P [\hat{Y} = y |G = g,Y = y]
    \label{eq:tpr}
\end{equation}

\begin{equation}
    \text{Gap}^{\text{TPR}}_{g, y} = \text{TPR}_{g,y} - \text{TPR}_{\sim g, y}
    \label{eq:gap}
\end{equation}

\begin{equation}
    \text{GAP}_g^{\text{RMS}} = \sqrt{\frac{1}{|C|} \sum_{y \in C}{{(\text{GAP}_{g, y} ^ {\text{TPR}}} )}^2 }
    \label{eq:gap_rms}
\end{equation}

We obtain 393,423 biographies from \citet{nullitout} which is a subset of the  original corpus of \citet{biasbios} describing people with 28 different occupations.\footnote{\citet{biasbios} use 399,000 biographies. The reduction is due to biographies no longer being available when scraping the data.} We split the data in the same 65\% training, 10\% development and 25\% test splits as used by \citet{biasbios} and \citet{nullitout}.
Like~\citet{nullitout}, we use logistic classifiers that take one of three representations of the biographies as input: (1) one-hot BOW, (2) averaged FastText embeddings \citep{fasttext} and (3) the last hidden state of BERT over the \emph{[CLS]} token. 

In this setup, debiasing projections are applied as as follows: For INLP, each of the three representations is debiased by training a linear classifier on all 28 occupation classes and using the learned vector to calculate the nullspace. Logistic regression in 100 iterations is used for the BOW representations,  150 linear SVM iterations for FastText representations and 300 linear SVM iterations for BERT. We use Scikit Learn \cite{pedregosa2011scikit} for all classification approaches. For the MP setup, we create a mean projection for every occupation $o$ (28 in total) by identifying the projection vector $P_o$ based on the mean of all female and the mean of all male biographies with occupation $o$.

 We report the accuracy of predicting the correct profession (Acc.), the root-mean-square of the $GAP^{TPR}$ scores of all professions ($GAP^{RMS}$) and the correlation between percentage of women in a given profession and the respective $GAP^{TPR}$ scores (Corr.). If debiasing is successful, the classification accuracy should remain high (compared to before debiasing), while the GAP score should decrease. 
We measure the correlation between underpredicting women (the $GAP^{TPR}$ score) and the true percentage of women in the set of professions. A lower correlation is preferable. The intuition behind measuring the correlations in this way is to measure the degree to which bias is amplified compared to the distribution of men and women in a profession. For example, a difference of 5\% can have different implications depending on the original distribution: It is worse if the predicted percentage of females in a profession goes down to 15\% when the true percentage is 20\% compared to a true percentage of 48\% going to 43\%. 

The results presented in Table~\ref{tab:biography_results} show that overall results of INLP and MP are comparable. MP retains higher accuracy compared to INLP. MP reduces the $GAP^{RMS}$ much more for one-hot BOW, slightly more for FastText embeddings and slightly less for BERT. INLP reduces the correlation a bit more for both FastText and BERT.

\begin{table}[h]
    \centering
    \renewcommand{\arraystretch}{1.2}
    \begin{tabular}{l c c c}
        \hline \hline
        \textbf{Original} & &  &  \\ \hline \hline
            Representation & Acc. &  $\text{GAP}_\text{female}^{\text{RMS}}$ &  Corr. \\ \hline
            one-hot BOW &    77.6\%&      0.182 &    0.895 \\
            FastText &       77.9\% &     0.172 &    0.891\\
            BERT &           74.7\% &     0.153 &    0.852\\
            
        \hline \hline
        \textbf{INLP} & &  &  \\ \hline \hline
            Representation & Acc. &  $\text{GAP}_\text{female}^{\text{RMS}}$ &  Corr. \\
            \hline
            one-hot BOW &   77.1     &   0.111 &       0.65 \\
            FastText &      73.6\% &     0.103 &    0.493\\
            BERT &          74.7\% &     0.065 &    0.353\\
            \hline \hline
            \textbf{MP} & &  &  \\ \hline \hline
            Representation &          Acc. &  $\text{GAP}_\text{female}^{\text{RMS}}$ &  Corr. \\
            \hline
            one-hot BOW &   76.67&        0.001 &       0.58 \\
            FastText &      75.6\%&  0.092 &    0.522\\
            BERT &      75.2\% &     0.087 &    0.395\\
            
    \end{tabular}
    \caption{We report on accuracy of the main task which is the occupation classification (Acc.), the TPR-GAP bias measure ($\text{GAP}_\text{female}^{\text{RMS}}$) and the correlation (Corr.) of TPR$_{g}$ and percentage of women for each occupation. You can see the report per vector representation after INLP and MP debiasing method as well as the original representations without debiasing.}
     \label{tab:biography_results}
\end{table}

\newpage

\section{Additional Results of Diving Deeper}\label{app:diving_even_deeper}

This appendix provides additional results on the WEAT scores. Figure~\ref{fig:inlp_weat_random} presents the result of applying random projection iterations after 8 INLP iterations. Figure~\ref{fig:weat_random_app} illustrates when random projections are applied to the original data (without first applying projections for removing gender). These figures illustrate that random projections after 8 INLP iterations mostly lead to a decrease in WEAT score, whereas applying random projection directly to fully biased data increases the scores in the far majority of the cases.

\begin{figure*}[ht!]
\begin{minipage}[b]{\columnwidth}
    \centering
    \includegraphics{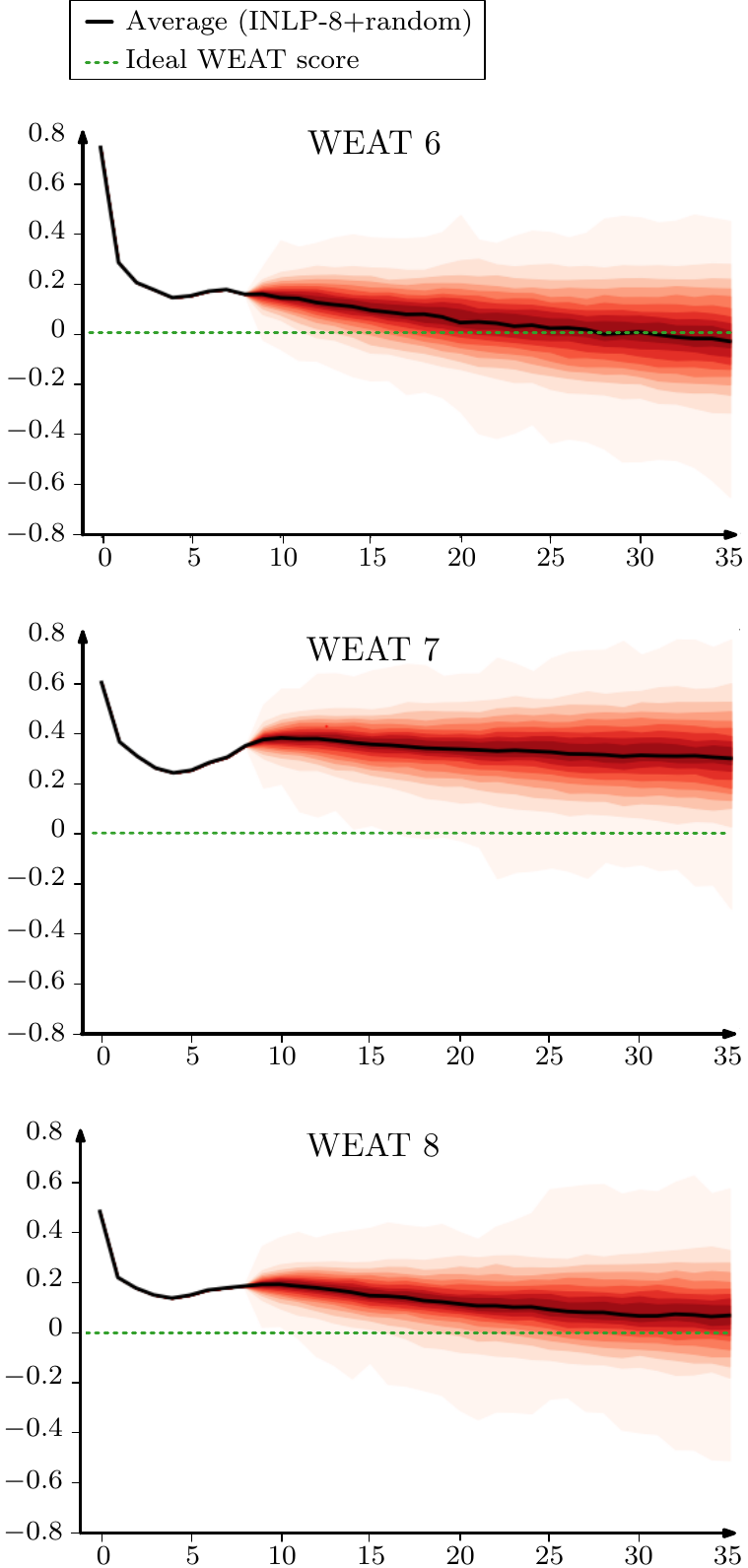}
    \caption{Plot of 500 runs where we first apply INLP 8 times and then continue with 27 iterations of random projections. The shades of red indicate various levels of confidence in WEAT scores ($y$-axis) after a certain number of iterations ($x$-axis), where darker red shades reflect higher probabilities of the WEAT score value after a certain number of iterations. The black line is the average WEAT score of all runs and the dotted green line is drawn at $y$=0 (the ideal WEAT score).}
    \label{fig:inlp_weat_random}
\end{minipage}
\hfill
\begin{minipage}[b]{\columnwidth}
    \centering
    \includegraphics{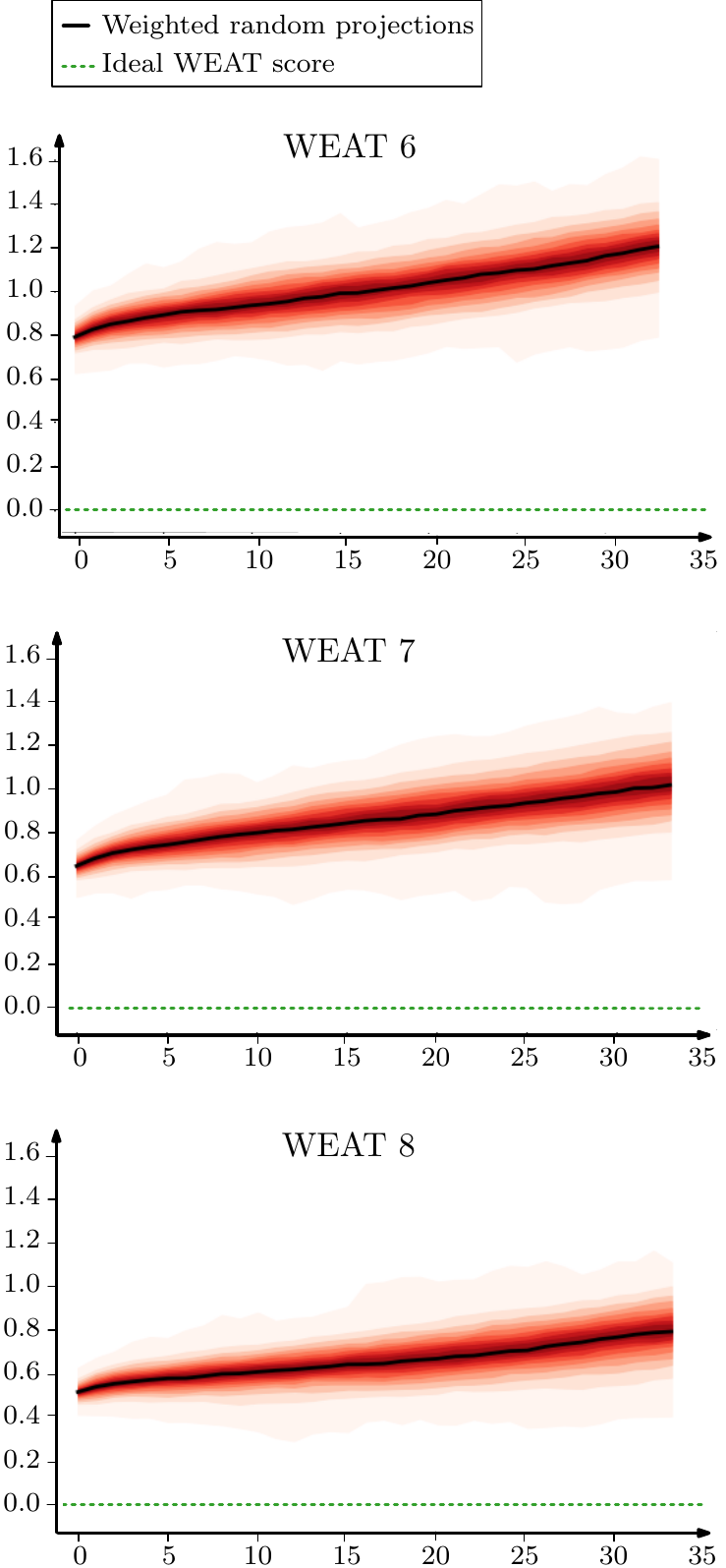}
    \caption{Plot of 500 runs where in every iteration we perform a weighted random projection. The shades of red indicate various levels of confidence in WEAT scores ($y$-axis) after a certain number of iterations ($x$-axis), where darker red shades reflect higher probabilities of the WEAT score value after a certain number of iterations. The black line is the average WEAT score of all runs and the dotted green line is drawn at $y$=0 (the ideal WEAT score).}
    \label{fig:weat_random_app}
\end{minipage}
\end{figure*}

\newpage

\onecolumn
\section{Changes in the Embedding Space of GloVe}
\label{sec:embspacechange}

\begin{table*}[h]
    \renewcommand{\arraystretch}{1.3}
    \centering
    \footnotesize
    \begin{tabu} {X[1.1,l]  X[3 l] X[3 l] X[3, l] } \hline
        Word & Original neighbors & Neighbors after MP & Neighbors after INLP  \\ \hline 
        order & orders, ordering, purchase & orders, ordering, purchase &  orders, ordering, \textcolor{CBred}{ordered} \\
        crack & keygen, cracks, torrent & keygen, cracks, torrent &  keygen, cracks, \textcolor{CBred}{warez} \\
        craigslist & ebay, craiglist, ads & ebay, craiglist, ads &  ebay, craiglist, \textcolor{CBred}{freecycle}\\
        populations & population, species, communities & population, species, communities & population, species, \textcolor{CBred}{habitats} \\
        epub & ebook, mobi, pdf & ebook, mobi, pdf & mobi, ebook, \textcolor{CBred}{kindle} \\
        finals & semifinals, playoffs, championship & semifinals, playoffs, semifinal & semifinals, semifinal, \textcolor{CBred}{quarterfinals}\\
        installed & install, installing, installation & install, installing, installation & install, installing, \textcolor{CBred}{installs}\\
        identifiable & disclose, identify, identifying & disclose, identify, identifying & disclose, \textcolor{CBred}{pii}, \textcolor{CBred}{distinguishable}\\
        photographs & photograph, photos, images & photograph, photos, images & photograph, images, photos\\
        ta & si, tu, ti & si, tu, \textcolor{CBred}{na} & \textcolor{CBred}{que}, \textcolor{CBred}{bien}, \textcolor{CBred}{ele}\\
        couch & sofa, sitting, bed & sofa, sitting, bed & sofa, \textcolor{CBred}{couches}, \textcolor{CBred}{loveseat} \\
        cooler & coolers, cooling, warmer & coolers, cooling, warmer & coolers, cooling, warmer\\
        becky & debbie, kathy, julie & debbie, kathy, \textcolor{CBred}{karen} & debbie, \textcolor{CBred}{steph}, \textcolor{CBred}{jen}\\
        appreciated & appreciate, greatly, thanks & appreciate, greatly, thanks & appreciate, \textcolor{CBred}{muchly}, thanks\\
        negotiation & negotiating, negotiations, mediation & negotiating, negotiations, mediation & negotiating, negotiations, mediation\\
        initial & subsequent, prior, following & subsequent, prior, following &  \textcolor{CBred}{intial}, \textcolor{CBred}{inital}, \textcolor{CBred}{subsequent}\\
        chloe & chanel, emma, lauren & chanel, \textcolor{CBred}{handbags}, \textcolor{CBred}{handbag} & chloe, chanel, \textcolor{CBred}{handbags} \\
        filipino & pinoy, filipinos, philippine & pinoy, filipinos, philippine &  filipinos, pinoy, \textcolor{CBred}{tagalog}\\
        relying & rely, relied, relies & rely, relied, relies & rely, relied, relies\\
        perpetual & eternal, continual, irrevocable & eternal, continual, irrevocable & irrevocable, \textcolor{CBred}{datejust}, \textcolor{CBred}{perpetuity} \\
        himself & him, herself, his & herself, him, \textcolor{CBred}{myself} & herself, \textcolor{CBred}{oneself}, \textcolor{CBred}{he}\\
        seaside & beach, beachside, picturesque & beach, beachside, picturesque & beachside, \textcolor{CBred}{idyllic}, \textcolor{CBred}{seafront}\\
        measure & measures, measuring, measured & measures, measuring, measured & measures, measuring, measured\\
        yorkshire & staffordshire, leeds, lancashire & staffordshire, leeds, lancashire & staffordshire, \textcolor{CBred}{dales}, lancashire\\
        merchandise & goods, items, apparel & goods, items, apparel & goods, items, \textcolor{CBred}{merchandize} \\
        sub & subs, k, def & subs, def, k & subs, \textcolor{CBred}{subbed}, \textcolor{CBred}{svs} \\
        tones & tone, hues, muted & tone, hues, muted &  tone, \textcolor{CBred}{polyphonic}, muted\\
        therapist & therapists, psychologist, therapy & therapists, therapy, psychologist &  therapists, \textcolor{CBred}{physiotherapist}, psychologist\\
        leaned & sighed, smiled, glanced & sighed, \textcolor{CBred}{leant}, smiled & \textcolor{CBred}{leant}, \textcolor{CBred}{leaning}, sighed\\
        tho & nnd, cuz, tlie & nnd, cuz, tlie & nnd, \textcolor{CBred}{tlio}, tlie\\
        lawyers & attorneys, lawyer, attorney & attorneys, lawyer, attorney &  attorneys, lawyer, attorney \\
        compile & compiling, compiler, compiles & compiling, compiles, compiler &  compiling, compiler, compiles \\
        chord & chords, progressions, guitar & chords, progressions, \textcolor{CBred}{melody} &  chords, progressions, \textcolor{CBred}{voicings} \\
        aims & aim, aimed, aiming & aim, aimed, aiming &  aim, aimed, aiming\\
        ensure & ensuring, assure, ensures & ensuring, assure, ensures &  ensuring, ensures, assure\\
        aerospace & aviation, engineering, automotive & aviation, engineering, automotive &  \textcolor{CBred}{aeronautics}, aviation, \textcolor{CBred}{aeronautical} \\
        clubhouse & pool, playground, amenities & pool, playground, amenities & \textcolor{CBred}{clubhouses}, pool, playground\\
        locking & lock, locks, latch & lock, locks, latch &  lock, locks, latch\\
        reign & reigns, emperor, throne & reigns, throne, \textcolor{CBred}{reigned} & reigns, \textcolor{CBred}{reigned}, emperor\\
        vulnerable & susceptible, fragile, affected & susceptible, fragile, affected & susceptible, \textcolor{CBred}{vunerable}, fragile\\

    \end{tabu}
   \caption{The 3-nearest neighbors of 40 randomly chosen words are shown originally, after mean projection (MP) and after INLP debiasing method. Changed tokens are marked in red.}
    \label{tab:emb_space_random}
\end{table*}


\begin{table*}[t]
    \renewcommand{\arraystretch}{1.3}
    \footnotesize
    \centering
    \begin{tabu} {X[1.1,l]  X[2.5 l] X[2.5 l] X[3.5 l] } \hline
    Word & Original neighbors & Neighbors after MP & Neighbors after INLP\\ \hline 
    ruth & helen, esther, margaret & esther, helen, margaret & \textcolor{CBred}{etting}, esther, \textcolor{CBred}{gehrig} \\
    charlotte & raleigh, nc, atlanta & raleigh, nc, atlanta & raleigh, \textcolor{CBred}{greensboro}, nc \\
    abigail & hannah, lydia, eliza & hannah, lydia, \textcolor{CBblue}{josiah} & hannah, \textcolor{CBred}{phebe}, \textcolor{CBblue}{josiah}\\
    sophie & julia, marie, lucy & \textcolor{CBred}{claire}, julia, \textcolor{CBred}{madeleine} & \textcolor{CBred}{moone}, \textcolor{CBred}{bextor}, \textcolor{CBred}{marceau}\\
    nichole & nicole, kimberly, kayla & nicole, kimberly, \textcolor{CBred}{mya} & nicole, \textcolor{CBred}{mya}, \textcolor{CBred}{heiress}\\
    emma & emily, lucy, sarah & emily, \textcolor{CBred}{watson}, sarah & \textcolor{CBred}{grint}, \textcolor{CBred}{frain}, \textcolor{CBred}{watson}\\
    olivia & emma, rachel, kate & \textcolor{CBred}{wilde}, \textcolor{CBred}{munn}, \textcolor{CBblue}{oliver} & \textcolor{CBred}{munn}, \textcolor{CBred}{thirlby}, \textcolor{CBred}{wilde}\\
    ava & devine, zoe, isabella & devine, isabella, \textcolor{CBred}{appellation} & \textcolor{CBred}{viticultural}, \textcolor{CBred}{devine}, \textcolor{CBred}{appellation}\\
    isabella & sophia, josephine, isabel & isabel, josephine, \textcolor{CBblue}{henry} & \textcolor{CBred}{rossellini}, \textcolor{CBred}{beeton}, \textcolor{CBblue}{ferdinand}\\
    sophia & anna, lydia, julia & \textcolor{CBred}{hagia}, \textcolor{CBred}{antipolis}, sofia & \textcolor{CBred}{hagia}, \textcolor{CBred}{antipolis}, \textcolor{CBred}{topkapi}\\
    mia & bella, mamma, mama & bella, mamma, \textcolor{CBred}{che} & \textcolor{CBred}{bangg}, mamma, \textcolor{CBred}{culpa}\\
    amelia & earhart, louisa, caroline & earhart, \textcolor{CBred}{fernandina}, louisa & earhart, \textcolor{CBred}{fernandina}, \textcolor{CBred}{bedelia}\\
    james & john, william, thomas & john, william, thomas & \textcolor{CBred}{jassie}, \textcolor{CBred}{nightfire}, \textcolor{CBred}{perse}\\
    john & james, william, paul & james, william, \textcolor{CBblue}{mary} & \textcolor{CBred}{deere}, \textcolor{CBred}{scatman}, \textcolor{CBred}{betjeman}\\
    robert & richard, william, james & richard, william, james & \textcolor{CBred}{pattinson}, \textcolor{CBred}{mccammon}, \textcolor{CBred}{blacksportsonline}\\
    michael & david, mike, brian & david, mike, \textcolor{CBred}{jackson} & \textcolor{CBred}{micheal}, \textcolor{CBred}{franti}, \textcolor{CBred}{moorcock}\\
    william & henry, edward, james & henry, edward, \textcolor{CBred}{charles} & edward, henry, \textcolor{CBred}{sir}\\
    david & stephen, richard, michael & richard, \textcolor{CBred}{alan}, stephen & \textcolor{CBred}{bisbal}, \textcolor{CBred}{magen}, \textcolor{CBred}{sylvian}\\
    richard & robert, william, david & robert, william, \textcolor{CBred}{john} & \textcolor{CBred}{clayderman}, \textcolor{CBred}{brautigan}, \textcolor{CBred}{rorty}\\
    joseph & francis, charles, thomas & \textcolor{CBblue}{mary}, francis, charles & \textcolor{CBred}{joesph}, \textcolor{CBred}{dreamcoat}, \textcolor{CBred}{abboud}\\
    thomas & james, william, john & william, james, john & \textcolor{CBred}{szasz}, \textcolor{CBred}{deshaun}, \textcolor{CBred}{tomy}\\
    ariel & sharon, alexis, hanna & \textcolor{CBred}{israel}, \textcolor{CBred}{israeli}, \textcolor{CBred}{gaza} & \textcolor{CBblue}{peterpan}, \textcolor{CBred}{mermaid}, \textcolor{CBred}{cinderella}\\
    mike & brian, chris, dave & dave, \textcolor{CBred}{jim}, chris & \textcolor{CBred}{mignola}, \textcolor{CBred}{birbiglia}, dave\\

    \end{tabu}
    \caption{The 3-nearest neighbors of a selection of inherently gendered names are shown originally, after mean projection (MP) and after INLP debiasing method. Changes to same gender, non-gendered or common nouns are marked in red. Changes to names associated with the opposite gender are marked in blue.}
    \label{tab:emb_space_gendered_name}
\end{table*}

\end{document}